\documentclass[preprint]{jair}

\setcopyright{cc}
\acmDOI{10.1613/jair.1.xxxxx}

\JAIRAE{Insert JAIR AE Name}
\JAIRTrack{} % Insert JAIR Track Name only if part of a special track
\acmVolume{4}
\acmArticle{6}
\acmMonth{8}
\acmYear{2026}

\RequirePackage[
  datamodel=acmdatamodel,
  style=acmauthoryear,
  backend=biber,
  giveninits=true,
  uniquename=init
  ]{biblatex}
\usepackage[utf8]{inputenc}
\usepackage{tikz}
\usetikzlibrary{arrows.meta,positioning,shapes.geometric,fit,backgrounds}
\usepackage{algorithm}
\usepackage{algorithmic}
\usepackage{mdframed}
\usepackage{xcolor}
\usepackage{amsthm}
\usepackage{enumerate}
\usepackage{booktabs}
\usepackage[shortlabels]{enumitem}
\newtheorem{definition}{Definition}
\newtheorem{proposition}{Proposition}
\newtheorem{remark}{Remark}

\usepackage[utf8]{inputenc}
\DeclareUnicodeCharacter{0308}{\"{}}

\begin{document}

%%
%% The "title" command has an optional parameter,
%% allowing the author to define a "short title" to be used in page headers.
%\title{JAIR Example Template}
\title[Fairness of Explanations]{Fairness of Explanations in Artificial Intelligence (AI): A Unifying Framework, Axioms,  and Future Direction toward Responsible AI}

%%
%% The "author" command and its associated commands are used to define
%% the authors and their affiliations.
%% Of note is the shared affiliation of the first two authors, and the
%% "authornote" and "authornotemark" commands
%% used to denote shared contribution to the research and/or corresponding author.
\author{Gideon Popoola}
\authornote{Corresponding Author.}
\orcid{0009-0001-9596-8115}
\email{gideon.popoola@student.montana.edu}
\affiliation{%
  \institution{Montana State University}
  \city{Bozeman}
  \state{Montana}
  \country{USA}
}

\author{John Sheppard}
\orcid{0000-0001-9487-5622}
\email{john.sheppard@montana.edu}
\affiliation{%
  \institution{Montana State University}
  \city{Bozeman}
  \state{Montana}
  \country{USA}}

%% The short list of authors must be made of the list of all authors' lastnames.
\renewcommand{\shortauthors}{Gideon and John}
%% If this is too long and overlaps other information printed in the page headers, use
%\renewcommand{\shortauthors}{Xu et al.}

%%
%% The abstract is a short summary of the work to be presented in the
%% article.
\begin{abstract}
Machine learning algorithms are being used in high-stakes decisions, including those in criminal justice, healthcare, credit, and employment. The research community has responded with two largely independent research fields: \emph{algorithmic fairness}, which targets equitable outcomes, and \emph{explainable AI} (XAI), which targets interpretable reasoning. This survey identifies and maps a novel blind spot at their intersection, which is a model that can satisfy every standard fairness criterion in its outputs while being profoundly unfair in its \emph{reasoning process}. We refer to this as the procedural bias, and mitigating it requires treating the fairness of explanations as a distinct object of scientific study.

To our knowledge, we provide the first unified theoretical and literature review of this emerging field and elucidate the drawbacks of post-hoc explainers in certifying explanation fairness. Our central contribution is a \emph{conditional invariance framework} formalizing explanation fairness as the requirement that explanations should be indifferent regardless of the protected attributes $ P(E(X) \in \cdot \mid X_\text{rel} = x_\text{rel},\, A = a) = P(E(X) \in \cdot \mid X_\text{rel} = x_\text{rel},\, A = b)$
for all task-relevant $x$, a single principle from which all existing explanation fairness metrics emerge as partial operationalizations. We introduce a seven-dimensional taxonomy in Table \ref{tab:taxonomy}, identify three generative mechanisms of explanation inequity (representation-driven, explanation-model mismatch, actionability-driven), and propose a canonical six-step evaluation workflow for operationalizing explanation fairness audits in practice.

Beyond the taxonomy, we formalize and synthesize five results that, while partially anticipated by individual papers, have not previously been stated as standalone claims: (i) explanation fairness is an interventional quantity not identifiable from observational data without causal assumptions, generalizing the specific vulnerability demonstrations in the fairwashing literature to a structural identifiability barrier for all model-agnostic post-hoc methods; (ii) the fairness-accuracy-explainability trilemma reflects deep structural tensions, not merely engineering trade-offs; (iii) attribution parity is necessary but not sufficient for \emph{epistemic fairness}, the equal ability of all groups to understand and act on explanations; (iv) five formal \emph{axioms of fair explanation systems} unify the positive design requirements the literature has only implied; and (v) a research agenda of concrete imperatives, not merely open questions, that specifies what must be built for the field to mature. We survey over 300 publications spanning 2016--2025, provide a failure taxonomy with remediation guidance, a practitioner decision guide, and seven prioritized open research directions. This survey is intended as a structured theoretical foundation for the emergent field of explanation fairness.
\end{abstract}

%\begin{abstract}
%      A clear and well-documented \LaTeX\ document is presented as an
%  article formatted for publication by ACM in a conference proceedings
%  or journal publication. Based on the ``acmart'' document class, this
%  article presents and explains many of the common variations, as well
%  as many of the formatting elements an author may use in the
%  preparation of the documentation of their work.
%\end{abstract}

%% JAIR Note: 
%% Do not include ACM CCS Concepts or Keywords

%% To be updated by authors.
\received{20 February 2007}
\received[accepted]{5 June 2009}

%%
%% This command processes the author and affiliation and title
%% information and builds the first part of the formatted document.
\maketitle

\section{Introduction}
\label{sec:intro}

Machine learning (ML) algorithms have become ubiquitous in day-to-day decision making, especially social sensitive domains that affect humans daily. ML models now determine who receives bail \cite{angwin2016machine, morin2024machine}, which patients receive additional medical interventions \cite{obermeyer2019dissecting, bravi2024development, ibrahim2021role}, whether a loan application is approved \cite{fuster2022predictably, sheikh2020approach, chouksey2023machine}, who is shortlisted for employment \cite{dastin2018amazon}, and which social media content individuals are exposed to \cite{bakshy2015exposure, shu2018content}. The remarkable predictive power of ML algorithms, such as deep neural networks, gradient-boosted trees, and large language models, arises from their ability to discover high-order patterns in large and diverse datasets \cite{huang2024unlocking, xiong2020evaluating}. However, this very capacity can introduce a profound opacity in which the internal computations that produce a prediction are often too complex to be understood by the individuals they affect, the practitioners who deploy them, or the regulators mandated to oversee them \cite{castelvecchi2016can, fernandez2024opacity}. The black-box nature of ML is therefore an important problem to address to achieve responsible artificial intelligence (AI) use, especially in socially sensitive domains.

The opacity of black-box models has generated two intensively studied research communities. The first is \emph{algorithmic fairness}, which investigates whether automated systems produce outcomes that are equitable across demographic groups defined by race, gender, age, disability, national origin, or other protected characteristics. Foundational work by \cite{dwork2012fairness}, \cite{kusner2017counterfactual}, \cite{hardt2016equality}, and \cite{chouldechova2017fair} has produced a rich taxonomy of fairness criteria, including statistical parity, equalized odds, equal opportunity, individual fairness, and counterfactual fairness. The second research community is \emph{explainable artificial intelligence} (XAI), which develops methods for making model behavior interpretable, transparent, and accountable. Influential methods include SHAP \cite{lundberg2017unified}, LIME \cite{ribeiro2016should}, GradCAM \cite{selvaraju2017grad}, ANCHOR\cite{ribeiro2018anchors}, Integrated Gradient \cite{sundararajan2017axiomatic}, and counterfactual explanation generators \cite{wachter2017counterfactual,mothilal2020explaining}.

\begin{table*}[t]
\caption{Unified taxonomy of fairness-of-explanation methods. Each row represents one of the seven survey dimensions with representative methods, explanation types, and fairness notions.}
\label{tab:taxonomy}
\small
\begin{tabular}{p{0.5cm} p{2.8cm} p{3.8cm} p{3.0cm} p{2.8cm} p{1.4cm}}
\toprule
\textbf{\#} & \textbf{Dimension} & \textbf{Representative Methods} & \textbf{Explanation Type} & \textbf{Fairness Notion} & \textbf{Sec.} \\
\midrule
1 & Distributional fairness of feature attributions &
  Slack et al., Aïvodji et al., Wang \& Wu, Begley et al. &
  SHAP / LIME / gradients &
  Equalized explainability; fairwashing &
  \ref{sec:attribution_fairness} \\
\addlinespace[2pt]
2 & Procedural fairness and explanation difference &
  CFA (\(\Delta\)REF, \(\Delta\)VEF), GPF\(_\text{FAE}\), EDiff &
  Feature attribution (SHAP) &
  Group procedural fairness &
  \ref{sec:procedural} \\
\addlinespace[2pt]
3 & Multi-objective optimization for joint fairness &
  EDiff loss, AnyLoss, FairMOE, Agarwal reductions &
  Any &
  Utility--fairness--procedural Pareto &
  \ref{sec:multiobj} \\
\addlinespace[2pt]
4 & Fairness of counterfactual explanations / recourse &
  DiCE, Wachter, CERTIFAI, FACTS, Causal recourse &
  Counterfactual &
  Recourse cost / access parity &
  \ref{sec:cf_recourse} \\
\addlinespace[2pt]
5 & Graph-structured explanation fairness &
  REFEREE, BIND, GNNUERS, CF-GNNExplainer &
  Subgraph / node attribution &
  Node-level, consumer fairness &
  \ref{sec:graph} \\
\addlinespace[2pt]
6 & Intersectional fairness and gerrymandering &
  Kearns et al., Foulds et al., Hébert-Johnson et al. &
  Any &
  Subgroup / differential fairness &
  \ref{sec:intersectional} \\
\addlinespace[2pt]
7 & Beyond attributions: rules, examples, concepts &
  ANCHOR, TCAV, CBMs, LTNs, ProtoDash &
  Rules / concepts / neurosymbolic &
  Procedural; conceptual bias &
  \ref{sec:beyond} \\
\bottomrule
\end{tabular}
\end{table*}

Despite the independent maturity of these fields, their \emph{intersection}, the fairness of explanations as a distinct research object, has received comparatively limited research attention \cite{dai2022fairness}. The common assumption in much XAI research is that explanation methods provide uniform informational value to all groups, while the common assumption in much fairness research is that bias resides in the model's predicted outcomes rather than in the reasoning process leading to them \cite{moons2019probast, grabowicz2022marrying}. Both assumptions are now empirically and theoretically challenged. Several research works have demonstrated that SHAP and LIME can be deliberately deceived to conceal discriminatory behavior behind apparently fair explanations \cite{slack2020fooling, bordt2022post}. Also,   \cite{aivodji2019fairwashing} formalized \emph{fairwashing} as the construction of a proxy model that appears fair to an explanation-based auditor, \cite{anders2020fairwashing} proved an impossibility result showing that no axiomatic post-hoc explanation method can reliably distinguish fair from unfair models, and ~\cite{begley2020explainability} documented that explanation quality is systematically lower for demographic minorities, exacerbating inequities in high-stakes decision settings. All these observations point to a deeper problem in ML prediction, which is procedural bias.
Procedural bias occurs when a model uses different reasoning processes for different demographic groups solely because of their protected attribute(s). Formally, procedural bias can also be referred to as the (Un)fairness of explanation.

An example of unfairness (bias) in the explanation of a model is depicted in Figure \ref{fig:ediff_illustration}. Two loan applicants who are identical in all features except their protected group may both receive positive credit decisions, which satisfies standard outcome-oriented fairness criteria (e.g., statistical parity), however, the model may explain these decisions in systematically different terms: penalizing the Group 1 applicant more severely for accumulated debt while rewarding the Group 0 applicant more generously for income. Such a model creates incentive structures that reinforce societal biases even as it satisfies outcome-level fairness constraints. The model is \emph{procedurally biased} because it evaluates identical individuals differently depending on their protected-group membership. This observation motivates a fundamental reconceptualization of fairness as a property that must encompass not only \emph{what} a model predicts but also \emph{how} it explains those predictions.

\begin{figure}[]
  \centering
  \includegraphics[width=10cm]{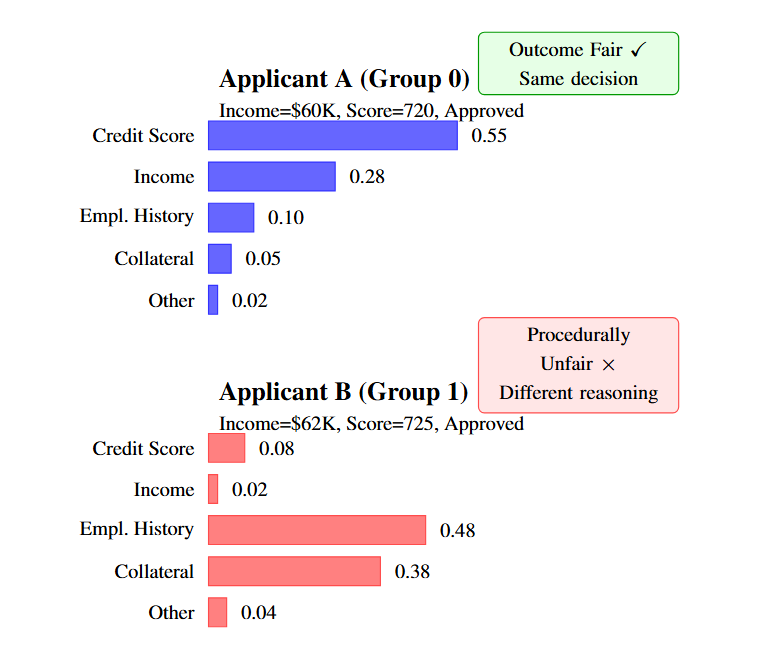}
  \caption{Hypothetical illustration of procedural unfairness: two hypothetical loan applicants receive identical outcomes, but their model explanations differ substantially, indicating that the decision criteria are applied asymmetrically across protected groups.}
  \label{fig:ediff_illustration}
\end{figure}

This survey maps the state of research on fairness of explanations in AI, organizing it under the conceptual umbrella of \emph{procedural fairness}, a notion with roots in the social-psychological literature of Thibaut and Walker \cite{thibaut1975procedural} and the political philosophy of Rawls \cite{rawls197theory}, and recently formalized in machine learning by Zhao et al. \cite{zhao2023fairness}, Wang et al.\cite{wang2024procedural}, and Germino et al.~\cite{germino2025ediff}. Procedural fairness demands not merely equitable outcomes (distributional fairness) but an equitable \emph{decision-making process}, which is that the criteria on which individuals are evaluated should not vary systematically across protected groups.

\subsection{Scope and Contributions}
This survey makes the following contributions to the literature:
\begin{itemize}
  \item \textbf{Conditional invariance framework.} We propose a formal unifying principle (Definition~\ref{def:ef}, Eq.~\ref{eq:ef_invariance}) showing that all major explanation fairness metrics are partial operationalizations of a single conditional invariance condition (Table~\ref{tab:unification}).
  \item \textbf{Five axioms of explanation fairness.} We extract five design axioms from the invariance condition and the metrics literature, Process Consistency, Counterfactual Stability, Distributional Parity, Actionability Symmetry, and Epistemic Accessibility, that constitute the positive requirements any fair explanation system must satisfy (Section~\ref{sec:axioms}). Axiom 5 is currently unmeasured by any existing metric, defining the epistemic fairness research frontier.
  \item \textbf{Identifiability proposition and hard limit.} We formalize the identifiability barrier as Proposition~\ref{prop:identifiability} with proof sketch, and state as Remark~\ref{rem:hardlimit} that post-hoc methods are \emph{structurally incapable} of certifying procedural fairness, not as a hedged implication but as a direct claim.
  \item \textbf{Mechanism taxonomy.} We identify three generative pathways,representation-driven inequity, explanation-model mismatch, and actionability-driven inequity (Section~\ref{sec:mechanisms}), and trace the full causal chain from data imbalance to recourse disparity.
  \item \textbf{Standard evaluation workflow.} We propose a canonical six-step audit protocol (Section~\ref{sec:eval_workflow}) designed to probe all three generative pathways, from automated metric-based steps through human interpretability validation.
  \item \textbf{Explicit limits.} We state without hedging five things explanation fairness cannot guarantee (Section~\ref{sec:limits}): post-hoc methods cannot certify fairness; explanation fairness $\neq$ model fairness; post-hoc methods cannot ensure causal neutrality; epistemic inequities are invisible to statistical metrics; the identifiability barrier is permanent.
  \item \textbf{Post-hoc vs.\ in-processing vs.\ intrinsic comparison.} Table~\ref{tab:posthoc_vs_intrinsic} maps three architectural families to the five axioms, making the design implications of each architecture explicit.
  \item \textbf{Research imperatives.} We state five concrete directives for what the field must build (Section~\ref{sec:imperatives}), not open questions but architectural and methodological requirements, and seven open problems prioritized by urgency and tractability.
  \item \textbf{Epistemic fairness.} We introduce the concept of epistemic fairness, the equal ability of all groups to understand and act on explanations, as a distinct dimension not implied by attribution parity (Section~\ref{sec:epistemic}).
  \item \textbf{Seven-dimension taxonomy.} We propose a hierarchical taxonomy spanning feature attribution fairness, procedural metrics, multi-objective optimization, counterfactual fairness, graph explanation fairness, intersectional fairness, and beyond-attribution methods.
  \item \textbf{Failure taxonomy and decision guide.} A structured failure taxonomy (Table~\ref{tab:failure_taxonomy}) and practitioner decision guide (Table~\ref{tab:decision_guide}) for method selection.
  \item \textbf{Cross-domain synthesis.} A four-domain coverage (credit, criminal justice, healthcare, employment) with a cross-domain synthesis identifying structural features that explain fairness frameworks must address in high-stakes deployment.
  \item \textbf{In-processing procedural fairness.} We survey GCIG/FairX~\cite{popoola2026gcig}, the first training-time framework enforcing group counterfactual explanation invariance via a differentiable regularization loss, and GESD/FEU~\cite{popoola2025gesd}, a stability-based procedural metric and multi-objective evolutionary optimization framework.
  \item \textbf{Intersectional procedural fairness.} We survey MESD/UEF~\cite{popoola2025mesd}, the first procedural fairness metric designed explicitly for intersectional subgroups, incorporating label-aware aggregation, empirical-Bayes shrinkage, and CVaR tail weighting to prevent fairness gerrymandering at the explanation level.
  \item \textbf{Comprehensive coverage.} We surveyed over 300 publications from NeurIPS, ICML, ICLR, FAccT, AAAI, AIES, ACM Computing Surveys, IEEE Transactions, Springer, Elsevier, and others, covering 2016--2025.
  %\item \textbf{Mathematical formulations.} Formal definitions for SHAP (Eq.~\ref{eq:shap}), LIME (Eq.~\ref{eq:lime}), $\Delta\text{REF}$ (Eq.~\ref{eq:ref}), $\Delta\text{VEF}$ (Eq.~\ref{eq:vef}), GPF$_\text{FAE}$ (Eq.~\ref{eq:gpf}), EDiff (Eq.~\ref{eq:ediff}), and graph-level fairness metrics.
\end{itemize}

\subsection{Related Surveys}

Prior surveys share a similar pattern that this work corrects. Mehrabi et al.\cite{mehrabi2021survey} comprehensively survey algorithmic bias and fairness but treat explanation only as a tool for diagnosing outcome bias, never as a source of bias in its own right. Guidotti et al.~\cite{guidotti2018survey} and Arrieta et al. \cite{arrieta2020explainable} survey XAI comprehensively while treating fairness as a desideratum rather than an evaluative dimension of the explanation itself. Adadi and Berrada \cite{adadi2018peeking} survey post-hoc XAI without noting that post-hoc methods under black-box access cannot reliably certify the fairness of the model they approximate. Baniecki and Biecek \cite{baniecki2024adversarial} survey adversarial XAI attacks, including fairwashing, but do not draw the implication that fairwashing is not a pathological attack but a \emph{structural risk} inherent to all post-hoc explanations. Karimi et al. \cite{karimi2022survey} survey algorithmic recourse with some fairness discussion, but treat recourse fairness as a cost-optimization problem rather than as a special case of explanation fairness. Pessach and Shmueli \cite{pessach2022review} survey ML fairness broadly; Dai et al. \cite{dai2024comprehensive} survey trustworthy GNNs; Chen et al. \cite{chen2024fairness} survey fairness-aware GNNs, all without recognizing that the fairness of GNN \emph{explanations} is a dimension independent of prediction fairness.

The shared assumption across all prior surveys is that explanation fairness is reducible either to outcome fairness (fairness surveys) or to explanation quality (XAI surveys). This survey argues that explanation fairness should be treated as a distinct field, and this is the conceptual contribution from which all others follow.

\subsection{Survey Organization}

Section~\ref{sec:background} introduces background on fairness definitions and XAI taxonomies. Section \ref{sec:framework} includes the unifying conditional invariance framework with five design axioms, the identifiability proposition, and the post-hoc vs. in-processing vs.\ intrinsic comparison. Section \ref{sec:mechanisms} identifies the three generative mechanisms by which explanation inequity arises and traces the full causal chain from data imbalance to recourse disparity. Section \ref{sec:attribution_fairness} surveys the fairness of feature attribution methods. Section \ref{sec:procedural} covers procedural fairness metrics, including the in-processing GCIG/FairX framework (Section \ref{sec:gcig}) and the stability-based GESD/FEU framework (Section \ref{sec:gesd}). Section \ref{sec:multiobj} treats multi-objective optimization. Section \ref{sec:cf_recourse} surveys counterfactual and recourse fairness. Section \ref{sec:graph} addresses graph explanation fairness. Section \ref{sec:intersectional} treats intersectional fairness and gerrymandering, including the MESD intersectional procedural metric (Section \ref{sec:mesd}). Section \ref{sec:beyond} surveys rule, example, concept, and neurosymbolic methods, including epistemic fairness. Section \ref{sec:auditing} presents auditing frameworks, the standard evaluation workflow, and an explicit statement of what explanation fairness cannot guarantee. Section \ref{sec:domains} covers domain applications across four high-stakes domains with cross-domain synthesis. Section \ref{sec:open} presents the research agenda: five imperatives and seven prioritized open problems. Section \ref{sec:conclusion} concludes.

\section{Background} \label{sec:background}
\subsection{Fairness Definitions in Machine Learning}
Let $\mathcal{X}$ denote an input feature space and $\mathcal{Y}$ a label space. Let $A$ denote a protected attribute (e.g., race, gender, or age) taking values in a finite set $\mathcal{A}$; when multiple protected attributes are present, $A$ may be replaced by a vector $(A_1, \ldots, A_m)$ with value space $\mathcal{A} = \mathcal{A}_1 \times \cdots \times \mathcal{A}_m$. A classifier $f : \mathcal{X} \to \mathcal{Y}$ is trained on data $\mathcal{D} = \{(x_i, a_i, y_i)\}_{i=1}^n$. Fairness literature has produced an extensive definition that can be broadly classified as group fairness, individual fairness, and causal fairness \cite{corbett2023measure,mitchell2021algorithmic,verma2018fairness}.

\subsubsection{Group Fairness}
Group fairness criteria require that statistical properties of model predictions be approximately equal across protected groups. \emph{Statistical parity} (demographic parity) \cite{dwork2012fairness} requires:
\begin{equation}
  P(\hat{Y}=1 \mid A=a) = P(\hat{Y}=1 \mid A=b),
  \label{eq:sp}
\end{equation} 
\emph{Equalized odds} \cite{hardt2016equality} additionally conditions on true labels: $P(\hat{Y}=1 \mid A=a, Y=y) = P(\hat{Y}=1 \mid A=b, Y=y)$ for all $a, b, y$. \emph{Equal opportunity} \cite{hardt2016equality} restricts this to the positive class. \emph{Predictive parity} requires equal positive predictive values: $P(Y=1 \mid \hat{Y}=1, A=a) = P(Y=1 \mid \hat{Y}=1, A=b)$ \cite{chouldechova2017fair}. The impossibility theorem results show that statistical parity, equalized odds, and calibration cannot all hold simultaneously when base rates differ across groups \cite{kleinberg2016inherent,chouldechova2017fair, bell2023possibility}, a constraint directly relevant to multi-objective optimization (Section \ref{sec:multiobj}).

\subsubsection{Individual Fairness}

Individual fairness requires that similar individuals receive similar outcomes: $d_\mathcal{Y}(f(x),\\ f(x')) \leq L \cdot d_\mathcal{X}(x, x')$ for all $x, x' \in \mathcal{X}$, where $d_\mathcal{X}$ is a task-specific similarity metric and $L$ is a Lipschitz constant \cite{dwork2012fairness, mukherjee2020learning, petersen2021post}. The metric $d_\mathcal{X}$ encodes domain knowledge about which differences between individuals are relevant \cite{ilvento2019metric,mukherjee2020learning}. \emph{Average individual fairness} \cite{sharif2019average, li2023accurate} aggregates individual fairness violations across a population, providing a more tractable audit criterion. \emph{Counterfactual fairness} \cite{kusner2017counterfactual, wu2019counterfactual, ma2023learning}, a special case tightly connected to causal modeling, requires that a prediction be unchanged when the protected attribute of an individual is changed in a causally consistent way. Fleisher \cite{fleisher2021s} argues that individual and counterfactual fairness are insufficient alone for group equity, motivating multi-metric approaches.

\subsubsection{Causal Fairness}

Causal fairness approaches model the data-generating procedure using a structural causal model (SCM) $\mathcal{M} = (\mathcal{G}, F, P_U)$ consisting of a directed acyclic graph, a set of structural equations, and a distribution over exogenous variables \cite{pearl2009causality, zhang2018fairness}. \emph{Path-specific counterfactual fairness} \cite{chiappa2019path, hatano2025path} allows discrimination through admissible causal paths while blocking others. \emph{No unresolved discrimination}~\cite{kilbertus2017avoiding} requires that the effect of $A$ on $\hat{Y}$ not flow through any path that is not mediated by a resolving variable. These frameworks are relevant to explanation fairness because causal graph structure directly informs which explanations are faithful to the underlying data-generating process~\cite{karimi2022survey}.

%\subsubsection{Procedural Fairness} Procedural fairness originates in social psychology~\cite{thibaut1975procedural} and political philosophy~\cite{rawls1971theory} as the principle that fair outcomes must arise through fair processes. In ML, it requires that the decision criteria applied to similar individuals (or groups) not vary systematically across protected groups~\cite{grgic2018beyond, dierckx2022minorities}. Wang et al.~\cite{wang2024procedural} define group procedural fairness formally: ``similar data points in two groups should have similar decision process or logic.'' Germino et al.~\cite{germino2025ediff} operationalize this using SHAP comparisons (Section~\ref{sec:procedural}). Rueda et al.~\cite{rueda2024procedural} argue from a medical ethics perspective that procedural fairness demands explainability in AI-based resource allocation as a matter of justice, extending the concept to normative claims about the right to explanation.

\subsection{Explainable Artificial Intelligence: A Taxonomy}
XAI methods can be taxonomized along several dimensions: \emph{scope} (local vs.\ global), \emph{model dependence} (model-agnostic vs.\ model-specific), \emph{explanation type} (feature importances, counterfactuals, rules, examples, concepts), and \emph{timing} (in-process vs.\ post hoc) \cite{ schwalbe2024comprehensive, speith2022review, linardatos2020explainable}.

\subsubsection{Feature Attribution Methods}
SHAP (SHapley Additive exPlanations)~\cite{lundberg2017unified} derives importance scores from cooperative game theory. The SHAP value for feature $j$ is:
\begin{equation}
  \phi_j(f, x) = \sum_{S \subseteq \mathcal{F} \setminus \{j\}} \frac{|S|!(|\mathcal{F}| - |S| - 1)!}{|\mathcal{F}|!} \left[ v_{S \cup \{j\}}(x) - v_S(x) \right],
  \label{eq:shap}
\end{equation}
where $v_S(x) = \mathbb{E}[f(X) \mid X_S = x_S]$ is the model output conditioned on features $S$. SHAP values satisfy the Shapley axioms: efficiency, symmetry, dummy, and additivity \cite{shapley1953value}. FastSHAP~\cite{jethani2021fastshap} approximates these values through an amortised neural explainer $\hat{\phi}_\theta : \mathcal{X} \to \mathbb{R}^d$ trained to minimize the expected surrogate loss $\mathbb{E}_{x, S}\!\left[(v_{S}(x) - v_{\emptyset}(x) - \mathbf{1}_S^\top \hat{\phi}_\theta(x))^2\right]$, where $\mathbf{1}_S$ is the binary mask selecting the features in subset $S$ and $v_S(x)$ is the conditional expectation as above. This enables real-time explanation generation critical for EDiff's training-time computation~\cite{germino2025ediff}. SHEAR \cite{ wang2024feature} further accelerates SHAP by selecting a contributive cooperator subset of features.

LIME (Local Interpretable Model-agnostic Explanations) \cite{ribeiro2016should} builds a local linear surrogate around the input by minimizing:
\begin{equation}
  \arg\min_{g \in \mathcal{G}} \mathcal{L}(f, g, \pi_x) + \Omega(g),
  \label{eq:lime}
\end{equation}
where $\pi_x(z) = \exp(-d(x,z)^2/\sigma^2)$ is a locality kernel and $\Omega$ is a complexity penalty. Gradient-based methods, including Saliency Maps \cite{simonyan2014deep}, GradCAM \cite{selvaraju2017grad}, and Integrated Gradients \cite{sundararajan2017axiomatic}, leverage the model's gradient structure to compute feature attributions, often visualized as saliency maps in image tasks. Permutation Feature Importance \cite{breiman2001random} provides a model-agnostic alternative by measuring prediction degradation when individual feature columns are shuffled.

\subsubsection{Counterfactual Explanation Methods}
Counterfactual explanations answer: ``what is the minimal change to input $x$ to flip the prediction?'' \cite{stepin2021survey, slack2021counterfactual}. Wachter et al. \cite{wachter2017counterfactual} formulate this as: $\arg\min_{x'} \lambda (f(x') - y')^2 + d(x, x')$, where $d$ measures feature change cost. DiCE \cite{mothilal2020explaining} extends this to generate $k$ diverse counterfactuals. FACE \cite{poyiadzi2020face} constrains counterfactuals to feasible paths in the data manifold. Growing Spheres \cite{laugel2018inverse} uses a sphere-growing procedure to find the closest counterfactual in feature space. Causal recourse methods \cite{karimi2021algorithmic} incorporate SCMs to ensure counterfactuals respect causal structure.

\subsubsection{Rule-Based and Example-Based Methods}
ANCHOR \cite{ribeiro2018anchors} produces if-then rules that are sufficient conditions for a prediction with high precision: $P(f(z) = f(x) \mid A(z)) \geq \tau$, where $A(z)$ denotes the rule firing on instance $z$. Prototype and criticism methods \cite{kim2016examples, obermair2023example} identify representative and unrepresentative training examples. ProtoDash \cite{gurumoorthy2019efficient} provides a fast selection algorithm for prototypes. Influence functions \cite{koh2017understanding, koh2019accuracy, du2014influence} identify which training examples most influence a given prediction, enabling instance-based explanations at the training level.

\subsubsection{Concept-Based and Neurosymbolic Methods}
TCAV (Testing with Concept Activation Vectors)  \cite{kim2018interpretability, pfau2021robust} measures the sensitivity of a prediction to user-defined high-level concepts by training binary classifiers on concept datasets and projecting their decision boundaries into activation space. Concept Bottleneck Models \cite{koh2020concept, chauhan2023interactive} explicitly predict human-interpretable concepts as intermediate representations. Neurosymbolic methods \cite{wagner2021logical, morales2024univariate} integrate logical constraint satisfaction with neural learning, offering the possibility of encoding fairness axioms as first-order logical clauses. These methods are surveyed in Section \ref{sec:beyond}.

\section{A Unifying Framework: Explanation Fairness as Conditional Invariance}\label{sec:framework}

The diverse metrics surveyed in this paper,$\Delta\text{REF}$, $\Delta\text{GESD}$, GPF$_\text{FAE}$, EDiff, recourse cost disparity, and subgroup calibration, are not independent inventions but approximate operationalizations of a single underlying invariance condition. Making this structure explicit transforms the survey from a literature synthesis into a theoretical contribution by revealing what all these methods are \emph{trying} to achieve, why they succeed or fail in different contexts, and what a complete theory of explanation fairness would require.

\subsection{The Core Invariance Principle}
\begin{definition}[Explanation Fairness as Conditional Invariance] \label{def:ef}
Let $f: \mathcal{X} \to \mathcal{Y}$ be a model and $E: \mathcal{X} \to \mathcal{E}$ an explanation function mapping inputs to an explanation space. Let $A$ denote a protected attribute taking values in a finite set $\mathcal{A}$ (e.g., $\mathcal{A} = \{0,1\}$ for binary attributes). Let $X_\text{rel} \in \mathcal{X}_\text{rel} \subseteq \mathcal{X} \setminus \{A\}$ denote the vector of \emph{task-relevant} features (i.e., all features except the protected attribute and its proxies). An explanation function $E$ is \emph{fair with respect to $f$ and $A$} if and only if:
\begin{equation}
  P(E(X) \in \cdot \mid X_\text{rel} = x_\text{rel},\, A = a) = P(E(X) \in \cdot \mid X_\text{rel} = x_\text{rel},\, A = b)
  \quad \forall\, a, b \in \mathcal{A},\; \forall\, x_\text{rel} \in \mathcal{X}_\text{rel},
  \label{eq:ef_invariance}
\end{equation}
where the left- and right-hand sides denote the conditional distribution of the explanation $E(X)$ over the population of individuals sharing the same task-relevant features $x_\text{rel}$ but differing in their protected attribute value.
\end{definition}

Equation~\ref{eq:ef_invariance} says: \emph{among individuals who are identical in all task-relevant features, the distribution of explanations should not depend on which protected group they belong to}. The distributional equality is over individuals in the population who share the same $x_\text{rel}$, not over a single deterministic evaluation---this is what makes the condition non-trivial even for deterministic explanation functions, because the conditioning partitions the population into cells indexed by $x_\text{rel}$, and within each cell the explanation distribution (over different individuals) must be invariant to $A$. A protected attribute should neither appear directly in explanations nor route through proxy features to alter the explanation distribution. This is the explanation-space analog of counterfactual fairness~\cite{kusner2017counterfactual}, extended from prediction space to explanation space.

\subsection{Existing Metrics as Partial operationalizations}
Each metric in the literature approximates Equation \ref{eq:ef_invariance} under different assumptions and with different tradeoffs, as summarized in Table \ref{tab:unification}. $\Delta\text{VEF}$ \cite{zhao2023fairness} measures the violation in expectation over explanation \emph{quality} (a scalar projection of $E$), making it computationally tractable but insensitive to shape differences in the explanation distribution. GPF$_\text{FAE}$ \cite{wang2024procedural} measures the Maximum Mean Discrepancy between matched explanation distributions, a stronger test of Eq.~\ref{eq:ef_invariance} that captures distributional differences beyond the mean, but requires a matching procedure that may itself be confounded. EDiff~\cite{germino2025ediff} operationalizes the invariance most directly by using counterfactual pairs, which, under the assumption that the counterfactual is generated in a causally consistent manner, eliminates confounding from legitimate group differences. However, if causal downstream effects of $A$ on other features are not propagated, the counterfactual may not represent a true causal intervention. The authors of EDiff mitigate this through a learned surrogate (FastSHAP) but do not fully resolve it. Recourse cost disparity metrics \cite{gupta2019equalizing,ustun2019actionable} operationalize invariance in the action space rather than the explanation space.

Three further metrics extend the framework in complementary directions. GCIG~\cite{popoola2026gcig} operationalizes Eq.~\ref{eq:ef_invariance} from the \emph{training side}: rather than measuring the invariance post-hoc, it enforces it during learning by computing Integrated Gradients relative to group-conditional baselines $b_{y,g} = \mathbb{E}[X \mid Y=y, A=g]$ and penalizing cross-group attribution variance as a differentiable regularization loss. In the linear case, this reduces exactly to $\|w \odot (b_{y,1} - b_{y,0})\|_2^2$, the  elementwise-weighted squared norm of group mean differences, providing a closed-form characterization of what Eq.\ref{eq:ef_invariance} demands of the model parameters. GESD \cite{popoola2025gesd} operationalizes the invariance along a \emph{stability} dimension: rather than comparing explanation vectors across groups, it measures how much each group's explanation changes under input perturbations, capturing a robustness-oriented violation of the invariance that mean-comparison metrics miss entirely. MESD \cite{popoola2025mesd} extends GESD to \emph{intersectional subgroups} via Cartesian product group construction, empirical-Bayes shrinkage for sparse cells, and CVaR tail weighting, ensuring the invariance condition is tested not only at the marginal group level but across all cells of the intersectional subgroup space, directly preventing the fairness gerrymandering that marginal tests permit.

\begin{table}[t]
\caption{Existing explanation fairness metrics as partial operationalizations of the conditional invariance principle (Eq.~\ref{eq:ef_invariance}).}
\label{tab:unification}
\small
\begin{tabular}{p{1.8cm} p{2.2cm} p{2.5cm} p{2.0cm} p{1.6cm} p{1.3cm}}
\toprule
\textbf{Metric} & \textbf{Projection of $E$} & \textbf{Comparison mode} & \textbf{Confound control} & \textbf{Enforcement} & \textbf{Complete-ness} \\
\midrule
$\Delta\text{VEF}$ & Scalar quality $q(x)$ & Group means & None & Post-hoc & Partial \\
$\Delta\text{REF}$ & Binary quality & Group ratios & None & Post-hoc & Partial \\
GPF$_\text{FAE}$ & Full vector $e \in \mathbb{R}^d$ & MMD on matched pairs & Similarity matching & Post-hoc & Moderate \\
EDiff & Full vector $\phi(x)$ & Counterfactual pairs & $A$ flipped, $x$ fixed & In-processing & Strong \\
Recourse CD & Action cost $c(a,x)$ & Group means & Actionability constraint & Post-hoc & Domain-specific \\
Eq.\ explainability & Wasserstein on $\phi$ & Distribution distance & None & Post-hoc & Moderate \\
GCIG & IG vector (normalised) & Group-conditional baselines & Label $y$ fixed & In-processing & Strong \\
GESD & Stability score $S(x)$ & Perturbation variance & Perturbation neighbourhood & Post-hoc & Moderate \\
MESD & Stability (intersectional) & CVaR-weighted tail disparity & Label + EB shrinkage & Post-hoc & Strong \\
\bottomrule
\end{tabular}
\end{table}

\subsection{Axioms of a Fair Explanation System}
\label{sec:axioms}

The conditional invariance principle (Definition \ref{def:ef}) defines explanation fairness as the \emph{absence} of group-differential explanation behavior. A constructive theory of explanation fairness, specifying what a fair explanation system positively \emph{must} do, requires extracting design axioms from the invariance condition and the metrics literature. We propose five such axioms. Together, they constitute the minimal requirements that any explanation system in a high-stakes domain should satisfy.

\begin{description}
  \item[\textbf{Axiom 1 (Process Consistency).}] The explanation function $E$ must apply the same decision criteria to similar individuals regardless of their protected group membership: if $d_\mathcal{X}(x, x') \leq \delta$ and $A(x) \neq A(x')$, then $\|E(x) - E(x')\| \leq \epsilon(\delta)$ for some continuity modulus $\epsilon$. This is the explanation-space is comparable to individual fairness~\cite{dwork2012fairness}, and is directly operationalized by GPF$_\text{FAE}$ (Eq. \ref{eq:gpf}).

  \item[\textbf{Axiom 2 (Counterfactual Stability).}] The explanation must be stable under protected-attribute counterfactuals: for any individual $x$ and attribute values $a, b \in \mathcal{A}$, $E(x \mid A=a) \approx E(x \mid A=b)$, where $x$ is otherwise identical. This is the formal statement of Definition~\ref{def:ef} and is directly operationalized by EDiff (Eq. \ref{eq:ediff}).

  \item[\textbf{Axiom 3 (Distributional Parity).}] The population-level distribution of explanations should not vary systematically across protected groups: $\mathcal{W}_p\bigl(\{E(x): x \in \mathcal{D}_a\}, \{E(x): x \in \mathcal{D}_b\}\bigr) \leq \epsilon$ for all $a, b \in \mathcal{A}$. This is operationalized by $\Delta\text{VEF}$ and equalized explainability \cite{wang2024achieving}.

  \item[\textbf{Axiom 4 (Actionability Symmetry).}] The cost required to act on an explanation, to change one's situation from denied to approved, must not differ systematically across protected groups: $\mathbb{E}[c(x) \mid A=a] \approx \mathbb{E}[c(x) \mid A=b]$, where $c(x)$ is the recourse cost for instance $x$. This is operationalized by recourse cost disparity metrics \cite{ustun2019actionable,gupta2019equalizing}.

  \item[\textbf{Axiom 5 (Epistemic Accessibility).}] An explanation must be interpretable and actionable, not only in information-theoretic terms but for the actual recipients. This means the ability to comprehend, contest, and act on an explanation must be equitable across demographic groups. This axiom is not captured by any current metric and requires human-centered evaluation (Section \ref{sec:epistemic}).
\end{description}

\begin{remark}
Axioms 1--4 are \emph{measurable} by existing metrics, though all rely on approximating assumptions detailed in Table~\ref{tab:unification}. Axiom 5 is currently \emph{unmeasured} by any quantitative metric in the literature. This gap defines the epistemic fairness research agenda. Notably, GCIG directly enforces Axiom 3 at training time, while GESD and MESD operationalize a robustness-oriented variant of Axiom 2 (stability under perturbation rather than under attribute counterfactuals), and MESD additionally operationalizes Axiom 3 at the intersectional level.
\end{remark}

The five axioms are not mutually redundant. A system satisfying Axiom 1 may violate Axiom 2 (group-level distributions may be matched while individual counterfactuals diverge). A system satisfying Axioms 1--4 may violate Axiom 5 (attribution parity does not imply comprehension parity). And no current post-hoc system can provably satisfy Axiom 2 for all inputs, as the following proposition makes precise.

\subsection{The Identifiability Barrier}
\label{sec:identifiability}

\begin{proposition}[Identifiability of Explanation Fairness]
\label{prop:identifiability}
Let $E: \mathcal{X} \to \mathcal{E}$ be any post-hoc explanation method that accesses the model $f$ only through its input-output behavior. Then there exist models $f_\text{fair}$ and $f_\text{unfair}$ such that:
\begin{enumerate}[(a)]
  \item $f_\text{fair}$ satisfies Definition~\ref{def:ef} (is explanation-fair), and
  \item $f_\text{unfair}$ violates Definition~\ref{def:ef} (is explanation-unfair), yet
  \item $E$ assigns identical explanation distributions to both: $\{E(x; f_\text{fair})\}_{x \in \mathcal{X}} \overset{d}{=} \{E(x; f_\text{unfair})\}_{x \in \mathcal{X}}$.
\end{enumerate}
\end{proposition}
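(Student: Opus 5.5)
The plan is an indistinguishability construction in the spirit of the fairwashing results of \cite{slack2020fooling} and \cite{anders2020fairwashing}. Since $E$ sees $f$ only through input--output queries, it suffices to build two models that agree on every input $E$ can query, one fair and one unfair. Part (c) then holds automatically: $E(x;f_\text{fair})=E(x;f_\text{unfair})$ pointwise for every $x$, so the two explanation distributions coincide for any population.

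The first step is to make (a) and (b) meaningful. Definition~\ref{def:ef} is stated relative to an explanation function, so if fairness were judged by $E$ itself, (b) and (c) would contradict each other. I would therefore fix a \emph{reference} explanation $E^\star$ that is interventional in nature. A concrete choice is interventional SHAP, Eq.~\ref{eq:shap} with $v_S(x)=\mathbb{E}[f(x_S,X'_{\bar S})]$ under the product (interventional) distribution, or equivalently attributions computed under an SCM. I would call a model explanation-fair if $E^\star$ satisfies Eq.~\ref{eq:ef_invariance}. The proposition then says that $E$ cannot certify the interventional notion, which matches the claim that explanation fairness is an interventional quantity.

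The second step is the construction on a support-restricted domain. Take features $(X_\text{rel},Z,A)$, where $Z$ is a proxy with $Z=A$ almost surely under the data distribution $P$. Take $E$'s queries to lie in $\operatorname{supp}(P)$, as with observational or manifold-constrained explainers. Set
\[
f_\text{fair}(x)=g(x_\text{rel}),\qquad f_\text{unfair}(x)=g(x_\text{rel})+\lambda\,\bigl(a-z\bigr)\,\psi(x_\text{rel}),
\]
with $\lambda\neq0$ and $\psi$ nonconstant. On the support, $a=z$, so the two models agree and part (c) follows. For $f_\text{fair}$, every interventional attribution is a function of $x_\text{rel}$ alone, and $A,Z$ are dummies. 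Hence $E^\star$ satisfies Eq.~\ref{eq:ef_invariance}, giving (a). For $f_\text{unfair}$, interventional SHAP evaluates $f$ at off-support points with $a\neq z$. A direct two-group computation then shows that the attributions to $A$ and $Z$ equal $\pm\tfrac{\lambda}{2}\,\psi(x_\text{rel})\,(a-\mathbb{E}[A])$ up to symmetric terms. These have opposite signs for $a=0$ and $a=1$ at fixed $x_\text{rel}$, which violates Eq.~\ref{eq:ef_invariance} and gives (b). I would verify this sign flip with an explicit binary calculation.

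The third step covers explainers allowed to query anywhere, including off-manifold points. There I would fall back on the observational-identifiability form of the argument. Both models induce the same joint law of $(X,f(X))$, so any estimator whose inputs are observational samples and query answers on $\operatorname{supp}(P)$ has identical output laws. Alternatively, I would cite the impossibility theorem of \cite{anders2020fairwashing}, which allows arbitrarily manipulated models to match any axiomatic explainer's output.

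I expect the main obstacle to be the conceptual step rather than the algebra. One must pin down the reference notion $E^\star$ and the query-access model precisely enough that (a)--(c) are simultaneously non-vacuous. This forces the statement to be read as holding for explainers restricted to the observational support, with the unrestricted case handled by the cited fairwashing result.
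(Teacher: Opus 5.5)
Your second step is sound for the explainers it targets, and it is a genuinely different instantiation from the paper's. The paper's sketch is the Slack/Anders scaffolding. The auditor's queries fall \emph{off} the data distribution, the proxy $g$ mimics $f_\text{fair}$ there, and unfairness is witnessed by $g$'s agreement with $f_\text{unfair}$ \emph{on} the deployment distribution. You do the mirror image: queries stay on $\operatorname{supp}(P)$, where the two models coincide, and unfairness is witnessed off-support by the interventional reference $E^\star$.

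Your version has three merits. It makes explicit the reference notion the paper leaves implicit; without one, (b) and a pointwise (c) are incompatible, as you say. It replaces the appeal to the literature with a closed-form pair. And it defeats the conditional SHAP of Eq.~\ref{eq:shap}, whose evaluations never leave the support.

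One correction to the computation. Under independent replacement, the extra term contributes nothing to the $X_\text{rel}$-coordinates. Its $A$-attribution is $\tfrac{\lambda}{2}(a-\mathbb{E}[A])\bigl(\psi(x_\text{rel})+\mathbb{E}[\psi(X_\text{rel})]\bigr)$, and the $Z$-attribution is its negative. So you need the two groups to overlap at $x_\text{rel}$ values where $\psi(x_\text{rel})\neq-\mathbb{E}[\psi(X_\text{rel})]$. Taking $A$ independent of $X_\text{rel}$ and $\psi$ centered and not almost surely zero suffices.

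The gap is your third step. The observation that both models induce the same law of $(X,f(X))$ only restates the on-support case. It says nothing about explainers that query off-support, which include KernelSHAP with a marginal background and LIME.

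Nor can the fallback to Anders et al.\ be grafted onto step two. That construction, like the paper's sketch, freezes the model on the manifold and edits it off the manifold, so the resulting model is unfair only in the on-distribution sense. Its $E^\star$ verdict is driven by the edited off-support behavior and is not inherited from $f_\text{unfair}$. You would be switching reference notions between the two cases.

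More decisively, with your reference and (c) read pointwise, the unrestricted claim is false. $E^\star$ itself accesses $f$ only through input--output queries. For $E=E^\star$, condition (c) forces identical conditional explanation laws, which contradicts (a)--(b). So no citation can close the unrestricted case.

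What is provable is a relative statement. Suppose the reference fairness functional depends on $f$ through its values on a region $W$, and $E$ queries only a region $Q$ such that $f$ can be changed on $W\setminus Q$ (on a set the reference notices) to flip the verdict. Then the pair exists. Your construction is the case $Q=\operatorname{supp}(P)$ with $W$ off-support. The paper's is the case $W=\operatorname{supp}(P)$ with $Q$ off-manifold. For $Q\supseteq W$ the claim fails.

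The paper's own sketch carries the same unstated restriction: it needs the query distribution to be separable from the deployment distribution. The right repair is to drop your third step and state this restriction explicitly.
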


\begin{proof}[Proof sketch]
The construction adapts the fairwashing approach of Anders et al.~\cite{anders2020fairwashing} and A\"ivodji et al.~\cite{aivodji2019fairwashing}. Let $f_\text{unfair}$ be a classifier that violates Definition~\ref{def:ef}. Construct a proxy model $g$ that agrees with $f_\text{unfair}$ on the in-distribution data but is designed so that on the auditor's query distribution (the points where $E$ evaluates the model), $g$ behaves identically to a fair model $f_\text{fair}$. Because $E$ accesses $g$ only through input-output queries, $E$ cannot distinguish $g$ from $f_\text{fair}$, because they produce identical outputs on all queried points. Therefore $E(x; g) = E(x; f_\text{fair})$ for all $x$ in the query set. Yet $g$ agrees with $f_\text{unfair}$ on the deployment distribution, so the deployed model remains unfair, even though the explanation method certifies fairness. The existence of such $g$ is guaranteed by the model-agnostic access assumption and is constructively demonstrated in the fairwashing literature~\cite{aivodji2019fairwashing,shahin2022washing}.
\end{proof}

Proposition~\ref{prop:identifiability} has an immediate corollary that, to our knowledge, has not previously been stated as a standalone claim in the literature:

\begin{remark}[The Hard Limit of Post-Hoc Explanation]
\label{rem:hardlimit}
\textbf{Post-hoc explanation methods are structurally incapable of certifying procedural fairness.} This is not an engineering limitation to be solved by better algorithms; it is a consequence of model-agnostic access. A system that passes all post-hoc explanation fairness audits may still be profoundly explanation-unfair. Auditors who rely exclusively on post-hoc inspection receive a lower bound on procedural fairness, never a certificate \cite{slack2020fooling}.
\end{remark}

Table~\ref{tab:posthoc_vs_intrinsic} makes the architectural implications concrete, comparing post-hoc, in-processing, and intrinsically interpretable approaches on their ability to satisfy each of the five axioms.

\begin{table}[t]
\caption{Ability of three explanation architectures to satisfy the five axioms of explanation fairness. ``Partial'' means the axiom can be approximately satisfied with additional constraints; ``No'' means structural incapability; ``Yes'' means the architecture admits satisfaction by design.}
\label{tab:posthoc_vs_intrinsic}
\small
\begin{tabular}{p{2.8cm} p{1.8cm} p{1.8cm} p{1.8cm} p{1.8cm} p{1.8cm}}
\toprule
\textbf{Architecture} & \textbf{A1: Process Consistency} & \textbf{A2: CF Stability} & \textbf{A3: Distr.\ Parity} & \textbf{A4: Actionability} & \textbf{A5: Epistemic} \\
\midrule
Post-hoc (SHAP, LIME) & Partial & \textbf{No} & Partial & Partial & No \\
In-processing (CFA, EDiff) & Yes & Partial & Yes & Yes & No \\
Intrinsic (CBMs, scorecards) & Yes & Yes & Yes & Yes & Partial \\
\bottomrule
\end{tabular}
\end{table}

\subsubsection{The Identifiability Barrier (Causal View)}

The condition compares $E(x \mid A=a)$ and $E(x \mid A=b)$ for the \emph{same} $x$, but in real data, we observe individuals with only one value of $A$. Detecting violations, therefore, requires either (a) counterfactual individuals (as in EDiff), (b) matching assumptions that may be confounded, or (c) access to the causal graph of the data-generating process. This is an instance of the fundamental \emph{identifiability problem} in causal inference~\cite{pearl2009causality}: explanation unfairness is an interventional quantity, but we observe only the observational distribution.

The vulnerability demonstrations of Anders et al.~\cite{anders2020fairwashing} and Shamsabadi et al.~\cite{shahin2022washing} can be reinterpreted through this lens: they show that the identifiability barrier is not merely statistical but \emph{computational}, even with infinite data, no output-based post-hoc method can certify that Eq.~\ref{eq:ef_invariance} holds, because the required counterfactual information is not encoded in model outputs. Proposition~\ref{prop:identifiability} generalizes these specific attack constructions into a structural claim about the entire class of model-agnostic post-hoc methods, reframing fairwashing not as an adversarial pathology but as a structural consequence of the identifiability gap in post-hoc explanation.

\subsubsection{Causal Roots of Explanation Unfairness}

The identifiability barrier points to a deeper causal structure underlying explanation unfairness. Post-hoc explanation methods are surrogates that approximate the model's true reasoning without access to its internal causal structure. Explanation unfairness arises from three causal mechanisms, each corresponding to a different locus in the data-generating process:

\begin{enumerate}
  \item \textbf{Proxy leakage.} The protected attribute $A$ has causal descendants $Z \subset \mathcal{X}$ (proxy features such as ZIP code for race, or occupation for gender). Even when $A$ is excluded from the model, SHAP distributes credit between $A$ and $Z$ according to their marginal contributions, causing the explanation vector to implicitly encode $A$ through its proxies.
  \item \textbf{Surrogate mismatch.} Post-hoc methods like LIME and SHAP construct local linear approximations whose accuracy depends on the local geometry of the feature manifold. Minority groups whose features occupy sparse or high-curvature regions of the manifold receive systematically less faithful approximations~\cite{begley2020explainability}, creating explanation unfairness even when the model itself is unbiased.
  \item \textbf{Distributional asymmetry.} When the causal structure of the data-generating process differs between groups, the same feature attribution scores carry different causal meanings across groups, producing explanations that are numerically similar but causally inequivalent.
\end{enumerate}

These three mechanisms have different implications for mitigation. Proxy leakage requires causal graph knowledge or proxy auditing. Surrogate mismatch requires distribution-aware explanation methods or ante-hoc interpretable models. Distributional asymmetry may be irremediable without structural interventions in the data-generating process itself. Figure~\ref{fig:pipeline} situates these relationships in a conceptual pipeline from data to human decision, showing where outcome fairness and explanation fairness each intervene

\begin{figure}[t]
\centering
\begin{tikzpicture}[
  node distance = 1.2cm and 1.8cm,
  box/.style  = {rectangle, rounded corners=4pt, draw=black!70, fill=white,
                 minimum width=2.4cm, minimum height=0.85cm, align=center,
                 font=\small},
  fairbox/.style = {rectangle, rounded corners=4pt, draw=black!80,
                    fill=gray!15, minimum width=2.6cm, minimum height=0.85cm,
                    align=center, font=\small\itshape},
  arr/.style  = {-{Latex[length=2mm]}, thick, black!70},
  darr/.style = {-{Latex[length=2mm]}, thick, black!50, dashed},
]

%% Main pipeline (horizontal)
\node[box] (data)  {Training\\Data};
\node[box, right=of data]  (model) {Model\\Representation};
\node[box, right=of model] (expl)  {Explanation\\Mapping};
\node[box, right=of expl]  (human) {Human\\Interpretation};

%% Arrows along main pipeline
\draw[arr] (data)  -- (model);
\draw[arr] (model) -- (expl);
\draw[arr] (expl)  -- (human);

%% Bias injection points
\node[above=0.45cm of data,  font=\scriptsize, text=black!60] {\textit{historical bias}};
\node[above=0.45cm of model, font=\scriptsize, text=black!60] {\textit{proxy leakage}};
\node[above=0.45cm of expl,  font=\scriptsize, text=black!60] {\textit{surrogate mismatch}};
\node[above=0.45cm of human, font=\scriptsize, text=black!60] {\textit{epistemic gap}};

%% Outcome and Explanation Fairness nodes (below)
\node[fairbox, below=1.1cm of model] (of) {Outcome\\Fairness};
\node[fairbox, below=1.1cm of expl]  (ef) {Explanation\\Fairness};

%% Dashed links down
\draw[darr] (model.south) -- (of.north)
            node[midway, left, font=\scriptsize, text=black!55] {$\hat{Y}$};
\draw[darr] (expl.south)  -- (ef.north)
            node[midway, right, font=\scriptsize, text=black!55] {$E(x)$};

%% Arrow between fairness nodes (they influence each other)
\draw[darr, <->] (of.east) -- (ef.west)
                 node[midway, below, font=\scriptsize, text=black!55] {partially independent};

\end{tikzpicture}
\caption{Conceptual pipeline of explanation fairness. Bias enters at multiple stages: historical bias in training data, proxy leakage during model learning, surrogate mismatch during post-hoc explanation, and epistemic gap during human interpretation. Outcome fairness and explanation fairness are both downstream of the model representation, but are \emph{partially independent}: a model can satisfy one while violating the other.}
\label{fig:pipeline}
\end{figure}

\subsubsection{Implications for the Taxonomy}

The conditional invariance framework also clarifies the \emph{relationships} between the seven survey dimensions: Dimensions 1--2 are direct operationalizations of Eq.~\ref{eq:ef_invariance} in feature-attribution space; Dimension 3 provides the training machinery to enforce it as a constraint; Dimension 4 operationalizes it in action space; Dimension 5 extends it to relational data; Dimension 6 requires it to hold across all intersectional subgroups; and Dimension 7 asks whether it holds when $E$ is a rule, concept, or example. This unifying view suggests that methods developed for one dimension can be adapted to others; for example, EDiff's counterfactual pairing strategy could be extended to GNN subgraph explanations, providing a generative principle for future method development.

\section{Mechanisms of Explanation Inequity}
\label{sec:mechanisms}

The sections that follow define \emph{what} explanation unfairness looks like across different method families. Before proceeding, this section addresses a prior question: \emph{how does explanation inequity arise as a system phenomenon?} Understanding generative mechanisms matters because the appropriate remediation depends entirely on the causal pathway being addressed. We identify three generative pathways through which explanation unfairness is produced, corresponding to different loci in the pipeline from data to human decision (Figure \ref{fig:pipeline}).

\subsection{Pathway 1: Representation-Driven Inequity}
The first pathway originates in the model's learned representation. When training data encodes historical discrimination through exclusion, underrepresentation, or proxy correlations, the model learns systematically different representations for different demographic groups. These representational disparities then propagate into explanations, such as SHAP values, gradient attributions, and local surrogates, all of which reflect the feature geometry of the learned representation. If that geometry is asymmetric across groups, the explanations will be asymmetric too, even if the model's predictions appear fair by distributional criteria.

The causal chain is explicit and traceable. \emph{Data imbalance} (minority groups underrepresented in training) causes \emph{latent embedding skew} (minority features occupy sparse, poorly-sampled regions of the representation manifold). This produces \emph{decision boundary asymmetry} (the learned classifier boundary makes different implicit assumptions about feature reliability for different groups). Boundary asymmetry directly causes \emph{attribution distortion} (SHAP and LIME assign systematically different importance to identical feature changes depending on which group the instance belongs to). Attribution distortion then produces \emph{recourse disparity} (counterfactual explanations that are mathematically valid are practically infeasible for minority groups because the attributed decision logic points them toward features that are harder to change). Each link in this chain has empirical support: Begley et al.~\cite{begley2020explainability} for embedding skew to attribution distortion; Ustun et al.~\cite{ustun2019actionable} for attribution distortion to recourse disparity; Ferrario and Loi~\cite{ferrario2022explainability} for temporal degradation of the full chain.

Three specific mechanisms drive this chain. \emph{Homophily leakage} in graph-structured data causes neighborhood aggregation to propagate protected attribute information into node representations even when the attribute is explicitly excluded~\cite{dai2021say}. \emph{Proxy feature dominance} occurs when features causally downstream of the protected attribute (ZIP code, occupation, credit history) absorb attributional credit that would otherwise be assigned to the protected attribute under a causal model, obscuring the true source of discrimination~\cite{molnar2020pitfalls}. \emph{Embedding density asymmetry} places minority groups in sparser regions of the representation manifold, producing higher-variance and lower-fidelity attribution scores~\cite{begley2020explainability}, which we formalized as surrogate mismatch in Section~\ref{sec:mechanisms}.

\emph{Implications}: Representation-driven inequity cannot be fixed by changing the explanation method alone; it requires intervention at the model or data level. Debiased embeddings~\cite{wang2024achieving}, causally constrained training~\cite{karimi2021algorithmic}, and ante-hoc interpretable models~\cite{rudin2019stop} are the principal remediation strategies. Crucially, the full causal chain means that debiasing only one link leaves others intact. For example, equalizing SHAP distributions (fixing attribution distortion) without also equalizing embedding density leaves the upstream cause of disparity unchanged.

\subsection{Pathway 2: Explanation-Model Mismatch}

The second pathway arises from the epistemic limitations of post-hoc explanation methods themselves. Post-hoc explainers are surrogates: they approximate the model's behavior using only query access to its inputs and outputs, without access to its internal causal structure. This creates an irremovable information gap between the explanation and the model's actual reasoning.

The gap manifests in three forms. \emph{Local linearity failure}: LIME's local linear approximation is accurate only where the model's decision boundary is approximately linear. For minority groups whose feature vectors lie near or on complex non-linear boundaries (e.g., the intersection of multiple risk factors in credit scoring), the linear surrogate is systematically less faithful~\cite{molnar2020pitfalls, knab2025lime}. \emph{Marginalization over correlated features}: SHAP's expectation over feature coalitions is technically correct but causally misleading when features are correlated. Attribution credit is distributed between a protected attribute and its proxies according to their correlational structure, not their causal contribution, producing explanations that hide discrimination under a veil of proxy variables~\cite{fuster2022predictably}. \emph{Adversarial exploitation}: because post-hoc methods have no ground truth for comparison, a biased model can be paired with a fairwashing proxy that presents a different model to the explainer, the explanation is faithful to the proxy, not to the deployed model~\cite{slack2020fooling, aivodji2019fairwashing}. Anders et al.~\cite{anders2020fairwashing} prove this cannot be fixed within the post-hoc paradigm.

\emph{Implications}: Explanation-model mismatch is not an engineering failure; it is a structural consequence of model-agnostic access. The appropriate responses are: using model-intrinsic explanation methods (Concept Bottleneck Models~\cite{koh2020concept}, inherently interpretable architectures~\cite{rudin2019stop}); requiring ante-hoc audits that inspect model weights directly; or accepting that post-hoc auditing provides only a lower bound on fairness, not a certificate.

\subsection{Pathway 3: Actionability-Driven Inequity}

The third pathway operates downstream of both the model and the explainer, in the translation from explanation to action. Even when an explanation accurately reports the features driving a decision, the \emph{ability to act} on that explanation varies systematically across demographic groups. This is the epistemic fairness dimension (Section~\ref{sec:epistemic}), and it is the form of inequity least well captured by purely technical metrics.

Actionability-driven inequity takes three forms. \emph{Structural recourse barriers}: a counterfactual explanation prescribing ``increase your income by \$15,000'' may be technically valid but practically inaccessible to a minority applicant facing structural economic barriers~\cite{ustun2019actionable,gupta2019equalizing}. \emph{Temporal obsolescence}: explanations and recourse prescriptions calibrated at deployment become invalid as the data distribution shifts, and minority groups undergoing rapid socioeconomic change experience this obsolescence faster~\cite{ferrario2022explainability,ezzeddine2025}. \emph{Comprehension asymmetry}: explanations equally complex in information-theoretic terms are not equally comprehensible across groups with different educational backgrounds or domain knowledge~\cite{lunich2024explainable,kaur2020interpreting}. An explanation that a data scientist reads in 30 seconds may require an unrepresented individual several minutes, time, access to counsel, and cognitive bandwidth that are differentially available.

\emph{Implications}: Actionability-driven inequity is not addressable by technical metrics alone. Remediation requires participatory design~\cite{rudin2022interpretable}, narrative-sensitive recourse generation, temporal robustness constraints, and human-centered evaluation that measures actual comprehension and action rates rather than information-theoretic proxies.

\subsection{How the Pathways Interact}

The three pathways are not independent. Representation-driven inequity amplifies explanation-model mismatch: minority groups in sparse manifold regions receive both more biased representations \emph{and} less faithful local approximations. Explanation-model mismatch enables adversarial exploitation of actionability asymmetries: a fairwashed explanation may report a recourse path that is feasible for a majority-group applicant but systematically prescribes higher-cost paths for minority applicants. Temporal obsolescence interacts with all three: distributional shift degrades both representational accuracy and surrogate fidelity, while simultaneously increasing recourse infeasibility for groups experiencing the most change.

This interaction structure has a direct practical implication: auditing that addresses only one pathway may leave the others undetected. A complete explanation, a fairness audit must probe all three.

\section{Fairness of Feature Attribution Methods}
\label{sec:attribution_fairness}

Feature attribution methods are the dominant paradigm for post-hoc explanation in deployed ML systems, but they are not inherently fair. The importance scores they assign can differ in quality, faithfulness, and consistency across demographic groups, and they can be manipulated to conceal discriminatory behavior \cite{zhou2022feature, zhou2021evaluating}. This section surveys the main research directions, such as disparate explanation quality, adversarial manipulation and fairwashing, bias in attribution scores, and mitigation strategies.

\subsection{Disparate Explanation Quality Across Groups}

The quality of an explanation can be measure using the following metrics: \emph{fidelity} (does the explanation accurately reflect the model's behavior?) \cite{yeh2019fidelity, wang2023impact, papenmeier2019model}, \emph{stability} (does the explanation remain consistent across nearby inputs?) \cite{fel2022good, lakkaraju2020robust}, and \emph{complexity} (is the explanation concise enough to be actionable?)~\cite{guidotti2018survey, li2023defining}. A model may produce explanations of systematically lower quality on all these dimensions for certain demographic groups, either as a direct consequence of model bias or as an artifact of the explanation method's assumptions.

Dimanov et al.~\cite{dimanov2020you} were among the first to systematically document disparate explanation quality, demonstrating that SHAP values can suppress the apparent importance of protected attributes while the underlying model still relies on them. Their empirical study on recidivism prediction showed that manipulating the SHAP computation to reduce the attributed weight of race does not remove racial discrimination; it merely hides it. This anticipates the impossibility results discussed below and highlights the insufficiency of explanation-based auditing when explanations themselves are not constrained to be faithful.

Begley et al.~\cite{begley2020explainability} examined SHAP and LIME attributions in healthcare predictions and found that fidelity and stability were significantly lower for minority patients. They attributed this to distributional heterogeneity within protected groups (i.e., minority groups are often underrepresented in training data), which leads to higher variance in local linear approximations and SHAP value estimates. The consequence is double jeopardy, meaning the groups most affected by potentially discriminatory medical predictions receive the least reliable explanations.

Strauch et al \cite{strauch2023saliency} extended this analysis to visual explanations in image classification, showing that GradCAM saliency maps exhibit differential attention patterns across demographic groups, with explanations for males being less localized and less aligned with the true discriminating features. This problem is particularly acute in medical imaging, where the quality of explanations directly affects clinicians' ability to interpret and act on model outputs.

Dodge et al. \cite{dodge2019explaining} and Lunich and Keller~\cite{lunich2024explainable} studied the relationship between explanation simplicity and perceived fairness in a human subject setting, finding that simplified decision tree explanations were perceived as fairer than complex model explanations by students, even when the underlying decisions were equally biased. This highlights that explanation fairness is not purely a technical property but has important social and perceptual dimensions.

\subsection{Adversarial Manipulation of Explanations and Fairwashing}

Among the most significant findings in the explanation fairness literature is the vulnerability of post-hoc methods to deliberate manipulation \cite{senevirathna2024deceiving}. Slack et al.~\cite{slack2020fooling} demonstrated a concrete attack in which a biased classifier is paired with a detection mechanism that identifies when a query comes from an auditor (by detecting distribution shift from training data) and substitutes an innocuous classifier for explanation purposes. SHAP and LIME, when applied in this setting, consistently report apparently fair explanations, whereas the deployed model discriminates. Their experiments covered income prediction and recidivism, showing that both SHAP and LIME could be comprehensively deceived with minimal reduction in predictive accuracy.

Aivodji et al.~\cite{aivodji2019fairwashing} formalized this as \emph{fairwashing} as the construction of a proxy model $g$ that is close to an unfair model $f$ in prediction space but presents fair explanations to an explanation-based auditor. Their theoretical analysis characterizes the set of classifiers that can be fairwashed and shows that the fairwashing problem is tractable whenever the explanation method is model-agnostic and post-hoc. More follow-ups ~\cite{aivodji2021characterizing, shahin2022washing} prove that fairwashing remains possible even when predictions are publicly observable, as long as the auditor relies on post-hoc explanations rather than directly inspecting the model. Lafarague et al.~\cite{lafargue2025exposing} introduce \emph{stealthy sampling}, in which LIME's locality kernel is exploited to present systematically different neighborhoods to auditors and affected individuals, concealing discriminatory patterns within the reported local linear approximation.

Anders et al.~\cite{anders2020fairwashing} demonstrated a fundamental vulnerability: under natural axiomatic conditions on explanation methods, there exist adversarial constructions of datasets and classifiers for which the explanation method cannot distinguish a fair from an unfair model. Their work shows specific attack constructions rather than a universal impossibility theorem; we generalize this to a structural identifiability claim in Proposition \ref{prop:identifiability}. This result does not preclude useful explanation auditing in practice but establishes a ceiling on the theoretical guarantees any post-hoc method can offer.

Baniecki and Biecek~\cite{baniecki2024adversarial} provide a broad survey of adversarial XAI attacks, cataloging attacks on SHAP, LIME, and gradient methods targeting fidelity, stability, and fairness. Cararanza et al.~\cite{carranza2023deceptive} construct attacks that manipulate explanation attention away from protected attributes while preserving outcome fairness metrics, showing that fairwashing detection is NP-hard under hardness assumptions analogous to those in weak agnostic learning.

\subsection{Bias in SHAP and LIME Attributions}

Beyond adversarial manipulation, feature attribution methods may exhibit systematic biases under ordinary deployment conditions. A fundamental issue with SHAP is its handling of feature correlations. The SHAP value computation assumes feature independence when $X_{\bar{s}}$ is drawn from its marginal rather than conditional distribution, but when this expectation is computed by marginalizing over the training distribution, correlated features receive distorted attributions \cite{ luo2020parameterized, takefuji2025limitations, hwang2025shap}. This is directly relevant to fairness, where protected attributes are typically correlated with proxy features (ZIP code and race, income, and gender), causing SHAP to distribute attribution credit between them in ways that are both technically correct under the model's learned correlations and socially misleading about the true source of discrimination~\cite{fuster2022predictably, tiwari2024bias}.

Molnar et al.~\cite{molnar2020pitfalls} systematically survey pitfalls in explainable ML, including the attribution bias introduced by correlated features in SHAP, the extrapolation problem in LIME (where the local linear approximation is evaluated at points far from the training distribution), and the instability of ANCHOR rules under small input perturbations. Each of these pitfalls has differential impacts across demographic groups (i.e., minority groups whose input features lie in sparse regions of the feature space are particularly susceptible to extrapolation artifacts and high-variance LIME attributions).

Ferrario and Loi~\cite{ferrario2022explainability} and Slack et al. ~\cite{slack2021reliable} study the temporal stability of SHAP-based explanations, showing that for groups experiencing distributional shifts (due to economic mobility, migration, or social change), explanation stability is lower and actionable recourse becomes obsolete faster. In credit-scoring models, minority groups experienced significantly higher rates of explanation obsolescence, a phenomenon in which a previously valid, actionable explanation no longer produces the desired outcome, raising temporal equity concerns that are absent from standard fairness analyses.

\subsection{Equalized Explainability and Mitigation}

Wang and Wu~\cite{wang2024achieving} propose \emph{equalized explainability} as an objective, which states that the distribution of explanation vectors across demographic groups should be identical. They achieve this through a data reconstruction approach that modifies the training data to reduce distributional disparities in feature importance scores without retraining the underlying model. Their approach is post-hoc and can be applied on top of any feature attribution method, making it broadly applicable. The equalized explainability condition can be formalized as:
\begin{equation}
  \mathcal{W}_p\left(\{\phi(x) : x \in \mathcal{D}_1\}, \{\phi(x) : x \in \mathcal{D}_0\}\right) \leq \epsilon,
\end{equation}
where $\mathcal{W}_p$ is the $p$-Wasserstein distance and $\phi(x)$ is the feature attribution vector for instance $x$.

Waller et al.~\cite{waller2024identifying} design an argumentation-based approach to identify sources of model bias using explanations, providing interpretable causal chains from features to observed disparities. Their method can diagnose whether a model's bias originates from specific features or from their interactions, informing targeted interventions. Zhao et al.~\cite{zhao2023fairness} propose the Comprehensive Fairness Algorithm (CFA), which jointly optimizes traditional fairness, explanation fairness, and predictive utility. Table \ref{tab:attribution} show key method that raise concern of fairness in feature attribution method

\begin{table}[]
\caption{Key feature attribution fairness methods.}
\label{tab:attribution}
\small
\begin{tabular}{p{2.7cm} p{1.7cm} p{2.5cm} p{2.2cm} p{1.2cm}}
\toprule
\textbf{Method} & \textbf{Attribution} & \textbf{Fairness notion} & \textbf{Strategy} & \textbf{Year} \\
\midrule
Slack et al.~\cite{slack2020fooling} & SHAP/LIME & Fairwashing attack & Adversarial & 2020 \\
Aivodji et al.~\cite{aivodji2019fairwashing, aivodji2021characterizing} & Feature imp. & Fairwashing formalization & Theoretical & 2019 \\
Anders et al.~\cite{anders2020fairwashing} & Feature imp. & Impossibility & Axiomatic & 2020 \\
Dimanov et al.~\cite{dimanov2020you} & SHAP & Procedural bias & Constrained opt. & 2020 \\
Lafarague et al.~\cite{lafargue2025exposing} & LIME & Stealthy sampling & Attack/detect & 2023 \\
Wang \& Wu~\cite{wang2024achieving} & SHAP & Equalized explainability & Post-processing & 2024 \\
Begley et al.~\cite{begley2020explainability} & SHAP/LIME & Quality disparity & Measurement & 2020 \\
Waller et al.~\cite{waller2024identifying} & Argumentation & Bias diagnosis & Diagnostic & 2024 \\
Shamsabadi et al.~\cite{shahin2022washing} & Feature imp. & Detection impossibility & Theoretical & 2022 \\
Ferrario \& Loi~\cite{ferrario2022explainability} & SHAP & Temporal fairness & Measurement & 2022 \\
\bottomrule
\end{tabular}
\end{table}

\subsection{Synthesis: Mechanisms of Attribution Fairness Failure}

Across the literature, four recurring mechanisms explain why feature attribution methods fail on fairness dimensions. \emph{Representation disparity} (Begley et al.~\cite{begley2020explainability}) arises from training data imbalance, where minority groups occupy sparser regions of the feature space, leading to higher-variance local approximations. \emph{Adversarial manipulation} ( Aïvodji et al.~\cite{aivodji2019fairwashing}, Anders et al.~\cite{anders2020fairwashing}) exploits the model-agnostic surface access of post-hoc methods. Because SHAP and LIME query only model outputs, they cannot detect when the model used for querying differs from the deployed one. \emph{Correlation laundering} (Dimanov et al.~\cite{dimanov2020you}, Molnar et al.~\cite{molnar2020pitfalls}) occurs when proxy features absorb protected attribute attribution, hiding discrimination behind apparently neutral feature importance scores. \emph{Temporal degradation} (Ferrario and Loi~\cite{ferrario2022explainability}) refers to the fact that explanation accuracy degrades under distributional shift, and this degradation is systematically faster for groups experiencing greater socioeconomic mobility.

These mechanisms are not independent, where adversarial manipulation often exploits representation disparity, and correlation laundering amplifies temporal degradation. Mitigating attribution unfairness, therefore, requires addressing the \emph{causal} rather than just the statistical relationship between attributions and protected attributes, a theme formalized in the conditional invariance framework (Section~\ref{sec:framework}).

\section{Procedural Fairness and Explanation Difference Metrics}
\label{sec:procedural}

A fundamental limitation of outcome-oriented fairness metrics (e.g., Equalized odds \cite{hardt2016equality}, Statistical parity \cite{chouldechova2017fair}) is their exclusive focus on the equity of model \emph{outputs} without examining the equity of the \emph{process} that produces them. Two models with identical prediction distributions may evaluate different demographic groups on entirely different criteria, creating structurally unequal decision processes that are invisible to outcome-based auditing \cite{balagopalan2022road, jain2020biased, dai2021will}. This section surveys the emerging body of work on procedural fairness in machine learning, with emphasis on formal metrics grounded in feature attribution explanations.

\subsection{Philosophical Foundations of Procedural Fairness}

The concept of procedural fairness has deep roots in the social sciences and law. Thibaut and Walker~\cite{thibaut1975procedural}, Thibaut et al. ~\cite{thibaut1973procedural}, and Tyler et al. ~\cite{tyler1997procedural} empirically established that individuals evaluate institutions not only by the favorableness of outcomes but also by the perceived fairness of the processes used to generate those outcomes. People are more willing to accept unfavorable decisions when they believe the decision-making process was fair, a finding replicated across legal, medical, and economic contexts. Rawls~\cite{rawls197theory} articulated this philosophically in the concept of pure procedural justice, which stipulates that a just outcome is one that results from a just process, even if its content cannot be independently specified.

In machine learning, these philosophical commitments translate into the requirement that algorithmic decision systems evaluate individuals on consistent, non-discriminatory criteria. Grgić-Hlača et al.~\cite{grgic2018beyond}, Dodge et al \cite{dodge2019explaining}, and Beutel et al \cite{beutel2019putting} were among the first to operationalize this formally, drawing on crowd-sourced human judgments to identify features deemed morally permissible or impermissible as inputs to criminal justice risk assessment tools. Their framework showed that public conceptions of procedural fairness are more nuanced than simple protected-attribute exclusion: participants distinguished between features that causally reflect criminal behavior and those that merely correlate with demographic characteristics. Rueda et al.~\cite{rueda2024just} extend this to medical contexts, arguing from bioethics that procedural fairness in resource allocation demands explainability as a right, grounding the demand for explanation fairness in normative rather than merely technical terms.  Table \ref{tab:procedural} shows common procedural fairness metrics.

\subsection{Ratio-Based and Value-Based Explanation Fairness ($\Delta$REF, $\Delta$VEF)}

Zhao, Wang, and Derr~\cite{zhao2023fairness} introduced two complementary metrics for measuring procedural fairness through the lens of explanation quality. Their framework distinguishes between the bias encoded in model predictions (outcome-oriented bias) and the bias encoded in the decision-making procedure (procedure-oriented bias), arguing that mitigating the latter is both independent of and complementary to mitigating the former.

Let $q: \mathcal{X} \to [0,1]$ be a continuous explanation quality score for instance, the fidelity of the SHAP explanation measured by the decrease in prediction accuracy when top-$k$ features are masked. Let $\tau$ be a quality threshold. The \emph{Ratio-based Explanation Fairness} (REF) metric is:
\begin{equation}
  \Delta\text{REF} = \left| \frac{\sum_{x \in \mathcal{D}_1} \mathbf{1}[q(x) \geq \tau]}{|\mathcal{D}_1|} - \frac{\sum_{x \in \mathcal{D}_0} \mathbf{1}[q(x) \geq \tau]}{|\mathcal{D}_0|} \right|,
  \label{eq:ref}
\end{equation}
directly paralleling the statistical parity definition (\ref{eq:sp}) at the level of explanation quality. The \emph{Value-based Explanation Fairness} metric avoids the discretizations artifact of $\Delta\text{REF}$ by operating on continuous scores:
\begin{equation}
  \Delta\text{VEF} = \left| \mathbb{E}_{x \in \mathcal{D}_1}[q(x)] - \mathbb{E}_{x \in \mathcal{D}_0}[q(x)] \right|.
  \label{eq:vef}
\end{equation}

The CFA algorithm optimizes these metrics alongside traditional fairness by minimizing representation distances between groups in an embedding space. An encoder $f_{\theta_f}$ maps inputs to representations $h(x) = f_{\theta_f}(x)$, and a classifier $c_{\theta_c}$ produces predictions from $h(x)$. For each instance $x_i$, an explainer $E$ computes feature attributions and generates a binary mask $m_i \in \{0,1\}^d$ that zeros out the top-$k$ most important features, yielding a masked input $x_i^m = x_i \odot m_i$. The masked representation is $h^m(x_i) = f_{\theta_f}(x_i^m)$. A key design insight is that the explanation fairness loss \emph{subsumes} the traditional group fairness loss: encouraging representations to be group-invariant both before and after masking simultaneously enforces outcome fairness and explanation fairness. The CFA loss is therefore:
\begin{equation}
  \mathcal{L}_\text{CFA} = \mathcal{L}_u + \lambda\, \mathcal{L}_\text{exp}, \quad \text{where } \; \mathcal{L}_\text{exp} = \mathbb{E}_{y \in \mathcal{Y}}\, \mathbb{E}_{(i,j) \in G_0^y \times G_1^y} \left[D(h_i^y,\, h_j^y) + D(h_i^{m,y},\, h_j^{m,y})\right],
  \label{eq:cfa}
\end{equation}
where $G_s^y = \{i : s_i = s,\, y_i = y\}$ denotes the set of instances from protected group $s$ with utility label $y$, $D(\cdot, \cdot)$ is a distance metric (e.g., MSE, sliced Wasserstein), and $\lambda$ balances utility against fairness. The label-conditioning is critical: it ensures that representations are compared only among instances sharing the same outcome, paralleling the label-conditional structure of Equalized Odds. Because $\mathcal{L}_\text{exp}$ contains both the original-feature term $D(h_i^y, h_j^y)$ (which corresponds to traditional group fairness) and the masked-feature term $D(h_i^{m,y}, h_j^{m,y})$ (which targets explanation fairness), a single loss term addresses both objectives simultaneously.

The key insight is that $\mathcal{L}_\text{explanation}$ minimizes the distance in the explanation-masked embedding space, directly reducing $\Delta\text{VEF}$ while $\mathcal{L}_\text{group}$ reduces outcome-oriented bias. \cite{zhao2023fairness} shows that these objectives are mutually reinforcing, which means optimizing for explanation fairness improves traditional fairness, and vice versa, suggesting a positive relationship rather than a tradeoff.

\subsection{Group Procedural Fairness via Feature Attribution (GPF$_\text{FAE}$)}

Wang, Huang, and Yao~\cite{wang2024procedural} provide a rigorous formalization of group procedural fairness. They define \emph{individual procedural fairness} as the requirement that similar data points from different groups have similar decision-making processes as captured by their feature attribution explanations:
\begin{equation}
  d_e\!\left(g(f_\theta, x^{(i)}),\, g(f_\theta, x^{(j)})\right) \approx 0, \quad \forall x^{(i)}, x^{(j)} \text{ s.t. } d_\mathcal{X}(x^{(i)}, x^{(j)}) \leq \delta,\; A(x^{(i)}) \neq A(x^{(j)}),
  \label{eq:ipf}
\end{equation}
where $g : (f_\theta, x) \mapsto e \in \mathbb{R}^d$ is the FAE function returning feature importance scores and $d_e$ measures distance in explanation space. \emph{Group procedural fairness} is the population-level version requiring that the distribution of explanation vectors for matched pairs from the privileged and unprivileged groups be indistinguishable. The $\text{GPF}_\text{FAE}$ metric formalizes this using Maximum Mean Discrepancy:
\begin{equation}
  \text{GPF}_\text{FAE} = \text{MMD}_k\!\left(\{e^{(i)}\}_{x^{(i)} \in \mathcal{D}_1'},\; \{e^{(j)}\}_{x^{(j)} \in \mathcal{D}_0'}\right),
  \label{eq:gpf}
\end{equation}
where $\mathcal{D}_1'$ and $\mathcal{D}_0'$ are matched datasets of similar pairs, and the MMD is computed with an exponential kernel. A permutation test on the kernel matrix provides a $p$-value.

\cite{wang2024procedural} validate GPF$_\text{FAE}$ on eight real-world datasets (COMPAS, Adult, German Credit, Bank Marketing, Credit Card, OULAD, Lawschool, Diabetes) and demonstrate that outcome-oriented fairness and procedural-oriented fairness are genuinely independent, which means models can be outcome fair and procedurally unfair, outcome unfair and procedurally fair, or both fair or both unfair simultaneously. They propose two mitigation strategies: removing features identified as sources of procedural unfairness, and regularizing the model's weight on the sensitive attribute during training. A related follow-up~\cite{wang2025procedural} extends GPF$_\text{FAE}$ to a training-time regularization framework.

\subsection{Explanation Difference (EDiff)}

Germino et al~\cite{germino2025ediff} introduce \emph{Explanation Difference} (EDiff), which bridges procedural and distributional fairness through a counterfactual lens. The key innovation is the use of counterfactual pairs: rather than comparing different individuals across groups (which may conflate legitimate group differences with procedural discrimination), EDiff compares the \emph{same} individual with their protected attribute flipped, exposing the pure effect of the protected attribute on the explanation. For an explainer $\phi$ (instantiated as FastSHAP~\cite{jethani2021fastshap}), the EDiff of a model $f$ is:
\begin{equation}
  \text{EDiff}(f, X, A, \phi) = \sum_{j=1}^d \frac{1}{|X|} \sum_{i=1}^{|X|} \big|\phi(x_i \mid A=0)_j - \phi(x_i \mid A=1)_j\big|,
  \label{eq:ediff}
\end{equation}
where $\phi(x_i \mid A=a)_j$ is the feature importance attributed to dimension $j$ when the protected attribute is set to $a$. The ideal value is 0. This operationalizes the formal procedural fairness condition: for any two individuals $x_1, x_2$ identical except for protected attribute $A$,
\begin{equation}
    f(x_1) \approx f(x_2) \quad \text{and} \quad |\phi(x_1) - \phi(x_2)| \approx 0,
\end{equation}
where $\Phi$ is the explanation function. The first condition is standard counterfactual fairness, while the second is the procedural fairness extension. A model is equitable when it achieves fair outcomes through a fair process.

The EDiff-based multi-objective optimization loss is:
\begin{equation}
  \mathcal{L}_\text{total} = \mathcal{L}_{F_1} + \mathcal{L}_{SP} + \mathcal{L}_\text{EDiff},
\end{equation}
where $\mathcal{L}_{F_1}$ uses AnyLoss~\cite{han2024anyloss} for differentiable F1 optimization, $\mathcal{L}_{SP} = |P(\hat{y}=1|A=0) - P(\hat{y}=1|A=1)|$ is the statistical parity loss, and $\mathcal{L}_\text{EDiff}$ is as in \ref{eq:ediff}. Curriculum learning is used to compute $\mathcal{L}_\text{EDiff}$ on progressively larger subsets (5\%, 10\%, 20\%, 50\% of training data) to manage computational cost. Experiments on nine datasets show that the EDiff-based approach achieves the best Pareto-optimal trade-off across all possible user preference weightings 59\% of the time, nearly 3.5$\times$ better than the second-best method.

\subsection{Group-Level Explanation Stability Disparity (GESD) and FEU}\label{sec:gesd}
Popoola and Sheppard~\cite{popoola2025gesd} introduce \emph{Group-level Explanation Stability Disparity} (GESD), a complementary metric measuring cross-group variation in explanation \emph{stability under input perturbations}. The key insight is that a group may receive explanations that are on average similar to another group's, yet whose explanations are far more unstable, collapsing or inverting under minor input changes. This robustness-oriented dimension of procedural fairness is invisible to metrics comparing mean attributions.

GESD aggregates SHAP and LIME explanations into an ensemble vector $E_\text{agg}(f,x) = \frac{1}{2}(E_\text{SHAP}(f,x) + E_\text{LIME}(f,x))$. Aggregation is motivated by the empirical finding that SHAP and LIME individually can fail to detect disparities that become visible in their combination \cite{bhatt2020evaluating, mhasawade2024understanding}, a result confirmed in GESD's Recidivism experiments, where SHAP alone yields no significant group difference ($p = 0.96$), but the combined explainer reveals strong disparity ($p = 0.002$). For each instance $x$, $N$ perturbed variants $\tilde{x}$ are generated by combining Gaussian noise and feature masking, and the per-instance instability is:
\begin{equation}
  \Delta(x) = \mathbb{E}_{\tilde{x} \sim P_x}\!\left[\|E_\text{agg}(f,x) - E_\text{agg}(f,\tilde{x})\|_1\right].
\end{equation}
A stability score $S(x) = 1/(1 + \Delta(x))$ inverts this to a higher-is-better quantity. Group stability $S_i$ is the mean stability across all instances in group $G_i$. For binary-protected attributes, GESD is the absolute difference $|S_0 - S_1|$. For $K > 2$ groups, GESD is the variance of group stability across groups.

GESD is model and explainer-agnostic, requiring only a trained model and a post-hoc explanation method. It is therefore applicable as an audit metric without modifying the training procedure, complementing in-processing methods like GCIG that enforce procedural fairness at training time.

\textbf{FEU} (Fairness-Explainability-Utility) integrates GESD into a multi-objective evolutionary optimization framework using NSGA-II, searching over model hyperparameters and decision thresholds to approximate the Pareto front among AUC, GESD, and Demographic Parity. Chebyshev scalarization is used to select a balanced solution from the Pareto front. On the German Credit dataset, FEU generates non-dominated solutions across all three objectives, while baseline methods (Adversarial, Reductions, Reweighting) consistently produce dominated solutions that sacrifice utility for marginal fairness gains. The GESD results also validate the three-objective framing, in which models achieving identical or superior outcome fairness can exhibit substantially different procedural fairness, confirming that GESD captures a genuinely distinct dimension of model behavior.

\subsection{Group Counterfactual Integrated Gradients (GCIG) and FairX}
\label{sec:gcig}

A key limitation shared by prior procedural fairness metrics, including $\Delta$REF, GPF$_\text{FAE}$, GESD, and EDiff, is that they are applied post-hoc, which means the model is trained first, and explanation disparity is measured afterward. This diagnostic approach can reveal procedural inequity but cannot prevent it from arising during training. Popoola and Sheppard~\cite{popoola2026gcig} address this gap with \emph{Group Counterfactual Integrated Gradients} (GCIG), an in-processing regularization framework that enforces explanation invariance across groups as a training-time objective.

The GCIG construction rests on \emph{group conditional baselines}: for each label $y \in \{0,1\}$ and protected group $g \in \mathcal{A}$, a baseline is defined as the expected feature vector of that group:
\[
 b_{y,g} = \mathbb{E}_{X \sim P(X \mid Y=y,\, A=g)}[X] \in \mathbb{R}^p,
\]
estimated in practice via exponentially moving averages over mini-batches to ensure stability under class imbalance. For an input $x$ with true label $y$, the \emph{Group Counterfactual IG attribution} with respect to group $g$ is:
\[
  \mathrm{IG}^{(g)}(x;\, y) = \mathrm{IG}(x,\, b_{y,g}) \in \mathbb{R}^p,
\]
i.e.\ the standard Integrated Gradients attribution computed relative to the group-conditional baseline rather than a global reference. Crucially, the input $x$ is held fixed across groups; only the reference context varies. This operationalizes the counterfactual question: \emph{How would the model's explanation change if the same individual were evaluated relative to a different group context?}

After $\ell_2$-normalization of each attribution vector, the instance-level explanation disparity is:
\begin{equation}
  V(x;\, y) = \left\|\operatorname{Var}_{g \in \mathcal{A}}\!\left[\widehat{\mathrm{IG}}^{(g)}(x;\, y)\right]\right\|_2,
\end{equation}
which, for a binary protected attribute ($|\mathcal{A}|=2$), is proportional to the squared $\ell_2$ distance between the two normalized attribution vectors (since the elementwise variance of two values reduces to a scaled squared difference). The population-level GCIG loss aggregates across labels and instances:
\begin{equation}
  \mathcal{L}_\text{GCIG}(\theta) = \sum_{y \in \{0,1\}} \pi_y\, \mathbb{E}_{X \sim P(X \mid Y=y)}\!\left[V(X;\, y)\right],
\end{equation}
where $\pi_y = P(Y=y)$ are label prevalence weights. This label-stratified construction ensures procedural fairness is enforced separately within each outcome class, directly paralleling the label-conditional structure of Equalized Odds.

In the linear case ($f_\theta(x) = w^\top x + b$), the GCIG loss has a closed form: $\mathcal{L}^{(y)}_\text{GCIG}(\theta) = \|w \odot (b_{y,1} - b_{y,0})\|_2^2$. This shows that GCIG penalizes models that rely on features whose typical values differ across groups, conditional on the label, exactly the directions that produce group-dependent reasoning without legitimate predictive justification.

GCIG is incorporated into a multi-objective training framework called \textbf{FairX}, whose total loss combines prediction accuracy, GCIG procedural fairness, and an outcome-fairness penalty:
\begin{equation}
  \mathcal{L}_\text{total}(\theta) = \mathcal{L}_\text{pred}(\theta) + \lambda_\text{ig}\,\mathcal{L}_\text{GCIG}(\theta) + \lambda_\text{fair}\,\mathcal{L}_\text{fair}(\theta).
\end{equation}
Experiments across four benchmark datasets (Adult Income, German Credit, COMPAS, Bank Marketing) show that FairX reduces explanation disparity by 65--83\% relative to unconstrained training while maintaining competitive F1 and EO gap. Crucially, an ablation study on COMPAS shows that adding outcome fairness alone \emph{worsens} procedural fairness by 14\%, while the joint objective achieves a 24\% improvement, demonstrating a relationship rather than conflict between the two objectives. 

The GCIG framework provides a constructive realization of Axiom 3 (Distributional Parity) from the framework in Section~\ref{sec:axioms}, by penalizing cross-group variance in attribution vectors computed relative to different group-conditional baselines, FairX enforces that the explanation distribution is invariant to group context at training time. Note that GCIG varies the \emph{reference point} (baseline) while holding the input fixed, which tests sensitivity to group-level distributional context rather than individual-level counterfactual attribute change. GCIG is, to our knowledge, the first in-processing method to jointly operationalize Equalized Odds and procedural fairness within a differentiable training objective.

\begin{table}[t]
\caption{Procedural fairness metrics based on feature attribution explanations.}
\label{tab:procedural}
\small
\begin{tabular}{p{2.0cm} p{2.0cm} p{2.5cm} p{2.5cm} p{1.2cm}}
\toprule
\textbf{Metric} & \textbf{Reference} & \textbf{Explanation basis} & \textbf{Comparison mode} & \textbf{Year} \\
\midrule
$\Delta\text{REF}$ & Zhao et al.~\cite{zhao2023fairness} & Quality (binary) & Group averages & 2023 \\
$\Delta\text{VEF}$ & Zhao et al.~\cite{zhao2023fairness} & Quality (continuous) & Group averages & 2023 \\
GPF$_\text{FAE}$ & Wang et al.~\cite{wang2024procedural} & SHAP/gradient & MMD on distributions & 2024 \\
EDiff & Germino et al.~\cite{germino2025ediff} & FastSHAP & Counterfactual pairs & 2025 \\
GCIG & Popoola \& Sheppard~\cite{popoola2026gcig} & Integrated Gradients & Group-conditional baselines & 2026 \\
GESD & Popoola \& Sheppard~\cite{popoola2025gesd} & SHAP + LIME ensemble & Stability under perturbation & 2025 \\
Feature sel.\ & Grgić-Hlača et al.~\cite{grgic2018beyond} & Permissibility & Human-in-the-loop & 2018 \\
Eq.\ explainability & Wang \& Wu~\cite{wang2024achieving} & SHAP & Wasserstein distance & 2024 \\
\bottomrule
\end{tabular}
\end{table}

\subsection{Procedural Bias: Sources and Implications}

The literature identifies three main sources of procedural bias. \emph{Data bias} arises when training data reflects historical discrimination, leading the model to learn different decision criteria for different groups even when protected attributes are excluded~\cite{barocas2017fairness, felin2021data, subramanian2021fairness}. Proxy features then carry discriminatory information into the decision process, creating procedural unfairness that is invisible to feature-selection-based approaches. \emph{Model bias} arises from the architecture and training procedure. In some cases, certain model classes may inherently learn more variable representations for minority groups due to their smaller sample sizes, imbalanced classes, and distributional heterogeneity ~\cite{dai2021say, kordzadeh2022algorithmic, panch2019artificial}. \emph{Explanation bias} is intrinsic to explanation methods. Explainers such as  SHAP can marginalize overcorrelated features, LIME's locality kernel's sensitivity to distribution shift, and GradCAM's focus on high-activation regions can all introduce systematic distortions that disproportionately affect demographic minorities \cite{zhang2019should, lee2019developing}.

The interaction of these three sources creates compounding risks. A model may be procedurally biased due to data bias, explanation methods may fail to detect this bias due to explanation bias, and adversaries may exploit explanation bias to construct fairwashing attacks. Understanding these interactions is a key open problem, discussed further in Section~\ref{sec:open}.

\section{Multi-Objective Optimization for Joint Fairness} \label{sec:multiobj}
A consistent theme across the fairness and XAI literature is that optimization targets are inherently multi-objective. Accuracy, outcome fairness, calibration, and procedural fairness are each legitimate objectives, none reducible to the others~\cite{chouldechova2017fair,kleinberg2016inherent}. This section surveys the multi-objective optimization (MOO) frameworks that have been developed to navigate these tradeoffs, treating the problem as field-wide rather than tied to any single method. Table \ref{tab:multiobj} provides a taxonomy of surveyed methods.

\subsection{The Multi-Objective Perspective on Algorithmic Fairness}

Multi-objective optimization (MOO) refers to finding solutions that optimize multiple potentially conflicting objectives simultaneously~\cite{gunantara2018review}. In the algorithmic fairness context, this perspective predates recent procedural fairness work: Hardt et al.~\cite{hardt2016equality} observed the tradeoff between equal opportunity and accuracy; Chouldechova~\cite{chouldechova2017fair} and Kleinberg et al.~\cite{kleinberg2016inherent} proved impossibility results showing that multiple fairness criteria cannot all be satisfied simultaneously when base rates differ. Agarwal et al.~\cite{agarwal2018reductions} reformulated the entire fair classification problem as constrained optimization. Wei and Niethammer~\cite{wei2022fairness} derived the Pareto frontier between fairness and accuracy. Wick and Tristan~\cite{wick2019unlocking} showed that the tradeoff can be substantially reduced with appropriate modeling choices. Maity et al.~\cite{maity2020notrade} demonstrate conditions under which enforcing fairness can even \emph{improve} accuracy by preventing overfitting to spurious correlations.

The extension to procedural fairness adds a third axis to this established two-dimensional tradeoff. Germino et al.~\cite{germino2025ediff} and Zhao et al.~\cite{zhao2023fairness} show empirically that a fully equitable model must achieve fair outcomes through a fair process while maintaining utility, introducing a three-objective problem: maximize $F_1$ (utility), minimize $|\Delta SP|$ (distributional fairness), and minimize EDiff/REF (procedural fairness). Neither achieving any two of these objectives guarantees the third, nor do optimal solutions to pairs of these objectives necessarily conflict. This three-objective framing is best understood as a \emph{field-level empirical finding} extending Chouldechova's impossibility~\cite{chouldechova2017fair} to three-dimensional constraint geometry, placing procedural fairness on equal footing with distributional concerns.

\subsection{Pareto Fronts and Scalarization in Fairness}

MOO solutions are typically obtained by one of two approaches~\cite{tian2021evolutionary,sharma2022comprehensive, pham2024fairness}. \emph{Pareto-based methods} search for the \emph{Pareto front} which are the set of solutions for which no other solution is superior on every objective simultaneously. Example of Pareto-based MOO algorithms include Non-dominated Sorting Genetic Algorithm (NSGA-II/III) \cite{deb2002fast, ibrahim2016elitensga} and  Wei and Niethammer and Kozdoba et al~\cite{wei2022fairness, kozdoba2024efficient} derive the fairness-accuracy Pareto front for binary classifiers and show that the achievable Pareto frontier depends critically on the choice of fairness metric. Evolutionary and genetic algorithms have been applied to multi-objective fairness problems due to their ability to explore non-convex Pareto fronts~\cite{wang2024generating}. Genetic fairness methods (e.g., FairMask~\cite{peng2022fairmask}) model-based rebalancing of protected attributes using evolutionary search. However, evolutionary methods scale poorly to large neural networks, motivating gradient-based scalarization as the dominant approach in deep learning fairness~\cite{sharma2022comprehensive}.

\emph{Scalarization} converts the multi-objective problem into a single-objective one by forming a weighted combination. An example of scalarization approach for MOO problem is Linear Scalarization \cite{noghin2015linear} which can be express as $\mathcal{L} = \sum_k w_k \mathcal{L}_k$. Germino et al.~\cite{germino2025ediff} use equal-weight scalarisation ($w_k = 1/3$ for three objectives) and demonstrate empirically that this simple scheme achieves a good trade-off across all user preference configurations.

\subsection{AnyLoss: Differentiable Fairness Metric Optimization}

A practical challenge in multi-objective fairness optimization is that common fairness metrics are non-differentiable, preventing their direct use as training objectives in gradient-based learning \cite{e2022differentiable, buyl2023fairret, foulds2019differential}. Han, Moniz, and Chawla~\cite{han2024anyloss} introduce AnyLoss, a framework for transforming any confusion-matrix-based performance measure into a differentiable loss function. The key mechanism is an amplification step that pushes predicted values $\hat{y}_i$ towards 0 or 1:
\begin{equation}
  \tilde{y}_i = \frac{1}{1 + e^{-L(\hat{y}_i - 0.5)}},
\end{equation}
where $L$ is an amplification hyperparameter. The amplified values $\tilde{y}_i$ can then be inserted into standard confusion matrix formulas to produce differentiable losses. The AnyLoss $F_1$ loss is:
\begin{equation}
  \mathcal{L}_{F_1} = 1 - \frac{2 \sum_i y_i \tilde{y}_i}{\sum_i y_i + \sum_i \tilde{y}_i},
\end{equation}
which is differentiable with respect to model parameters $\theta$ and can be combined with fairness losses in a scalarized objective. Germino et al.~\cite{germino2025ediff} use AnyLoss within their EDiff framework to maintain utility while optimizing for both distributional and procedural fairness.

\subsection{Joint Fairness and Explanation Algorithms}

The Comprehensive Fairness Algorithm (CFA)~\cite{zhao2023fairness} jointly optimizes traditional group fairness, explanation fairness ($\Delta$REF, $\Delta$VEF), and predictive utility. CFA operates on learned embeddings: a GNN or MLP encoder maps inputs to a representation $h(x) \in \mathbb{R}^k$, and a decoder produces predictions from $h(x)$. The CFA loss combines three terms: (1) a task loss $\mathcal{L}_\text{task}$ for predictive utility, (2) a group fairness loss $\mathcal{L}_\text{group}$ that penalises the mean distance between the group-conditional embedding centroids $\bar{h}_0$ and $\bar{h}_1$ (not all pairwise distances), encouraging the embeddings to be group-invariant, and (3) an explanation fairness loss $\mathcal{L}_\text{expl}$ that measures $\Delta$REF and $\Delta$VEF on the post-hoc explanations generated from the learned representations. The key insight is that making the representation space group-invariant at the embedding level simultaneously reduces both outcome and explanation disparity, because post-hoc explanations (SHAP, LIME) computed on top of fair embeddings inherit their invariance properties. CFA thus provides empirical evidence that outcome and procedural fairness objectives can be mutually reinforcing when mediated through a shared representation---though this complementarity is an empirical finding specific to their architecture, not a general theoretical guarantee, since Wang et al.~\cite{wang2024procedural} demonstrate that the two objectives can be genuinely independent in other settings.

Wang et al. \cite{wang2024generating} introduce a framework called Generating Explanation for Bias (GEB), which jointly optimizes traditional outcome fairness, a procedure-oriented fairness metric, and utility metrics in a graph neural network (GNN) to retrieve subgraphs that exhibit higher outcome and procedural fairness. The goal of their optimization is to lower the amount of bias that passes through each node in GNNs. 

Fan et al \cite{fan2022explanation} proposed Explanation Guided Genetic algorithm ExpGA, a model-agnostic approach for detecting individual fairness issues in artificial intelligence systems through a combination of explanation methods and genetic algorithms. Their work addresses critical limitations in existing fairness testing approaches, which suffer from low efficiency, low effectiveness, and model-specificity constraints.

Beyond a multi-objective optimization perspective, several works have examined the role of fairness in explanation, particularly how people perceive a prediction's fairness when an explanation is provided. \cite{deck2024critical, ramachandranpillai2025fairxai, matta2023trustworthy, angerschmid2022fairness}

\subsection{Fairness Optimization}

Agarwal et al.~\cite{agarwal2018reductions} provide a general reduction framework that converts a fairness-constrained classification problem into a series of cost-sensitive classification problems, each solvable by a standard learner. Their \emph{exponentiated gradient} algorithm provably converges to the best randomized classifier subject to a given fairness constraint, and can be applied to any fairness metric expressible as a linear constraint on the confusion matrix. Zhang, Lemoine, and Mitchell~\cite{zhang2018mitigating} apply adversarial learning to fairness optimization, training a predictor to maximize utility while an adversary attempts to predict the protected attribute from the predictor's representation. The Nash equilibrium of this game produces representations that are approximately fair.

Hardt, Price, and Srebro~\cite{hardt2016equality} propose a post-processing approach that adjusts classifier thresholds per group to achieve equalized odds, without modifying the underlying model. Feldman et al.~\cite{feldman2015certifying} use pre-processing to remove disparate impact from training data before model training. Peng, Chakraborty, and Menzies~\cite{peng2022fairmask} propose FairMask, which relabels protected attributes in test data using model-based rebalancing, achieving fairness improvements across multiple metrics simultaneously.

FairMOE~\cite{germino2024fairmoe} proposes a mixture-of-experts framework that achieves group fairness by incorporating counterfactual fairness within the optimization process, demonstrating that group and individual fairness can be jointly optimized in a single mixture model.

\subsection{The Fairness-Accuracy-Explainability Trilemma}

Germino et al.~\cite{germino2025ediff} and Schoeffer et al \cite{schoeffer2024explanations} articulate a broader trilemma which states that a model cannot simultaneously maximize (1) predictive performance, (2) distributional fairness, and (3) procedural fairness without compromise. Their experimental results across several datasets show that optimizing any two objectives without the third leads to violation of the third in the majority of cases. This trilemma extends the well-known fairness-accuracy tradeoff~\cite{zliobaite2015accuracy} and the impossibility results of Chouldechova~\cite{chouldechova2017fair} to a three-dimensional setting. Practitioners must make explicit value judgments about how to weight these three objectives, and the choice of weights encodes normative commitments about equity that should be made transparent to stakeholders.

\begin{table}[t]
\caption{Multi-objective fairness optimization frameworks.}
\label{tab:multiobj}
\small
\begin{tabular}{p{2.4cm} p{1.8cm} p{2.5cm} p{2.2cm} p{1.2cm}}
\toprule
\textbf{Method} & \textbf{Reference} & \textbf{Objectives} & \textbf{Strategy} & \textbf{Year} \\
\midrule
EDiff & Germino et al.~\cite{germino2025ediff} & $F_1$, SP, EDiff & Scalarisation & 2025 \\
CFA & Zhao et al.~\cite{zhao2023fairness} & Task, group, expl. & Embedding reg. & 2023 \\
AnyLoss & Han et al.~\cite{han2024anyloss} & Any CM metric & Differentiable loss & 2024 \\
Agarwal et al.\ & \cite{agarwal2018reductions} & Utility, fairness & Reductions & 2018 \\
FairMask & Peng et al.~\cite{peng2022fairmask} & Multiple & Relabelling & 2022 \\
Adversarial Deb. & Zhang et al.~\cite{zhang2018mitigating} & Utility, fairness & Adversarial & 2018 \\
FairMOE & Germino et al.~\cite{germino2024fairmoe} & Group, counterfactual & Mixture of experts & 2024 \\
GEB & Wang et al. \cite{wang2024generating} & Exp, Fair, Utility & Pareto Fronts & 2024\\
FairX & Popoola \& Sheppard~\cite{popoola2026gcig} & $F_1$, EO, GCIG & In-proc.\ regularisation & 2026 \\
FEU & Popoola \& Sheppard~\cite{popoola2025gesd} & AUC, DP, GESD & NSGA-II & 2025 \\
UEF & Popoola \& Sheppard~\cite{popoola2025mesd} & AUC, DP, MESD & NSGA-II + CVaR & 2025 \\
\bottomrule
\end{tabular}
\end{table}

\section{Fairness of Counterfactual Explanations and Algorithmic Recourse}
\label{sec:cf_recourse}

Counterfactual explanations occupy a special position in the fairness-of-explanation literature because they are simultaneously a form of explanation (answering ``why was I denied, and what would have to change?'') and a form of algorithmic recourse (prescribing actionable changes to achieve a desired outcome) \cite{karimi2021algorithmic, upadhyay2021towards, verma2024counterfactual, goyal2019counterfactual}. The fairness of both dimensions has been extensively studied.

\subsection{Recourse Burden Disparities}

Ustun, Spangher, and Liu~\cite{ustun2019actionable} introduced the notion of \emph{actionable recourse}, which is the ability for an individual to change their predicted outcome through personal actions, subject to constraints on what can realistically be changed. Their study of credit and recidivism prediction showed that certain demographic groups face systematically higher recourse burdens which is that they must change more features, by larger amounts, and at greater personal cost, to achieve the same outcome change as more privileged groups. This disparity arises not from intentional discrimination but from the structure of learned decision boundaries in relation to distributional differences between groups.

Von Kügelgen et al.~\cite{vonkugelgen2022fairness} extended this analysis using a causal framework, distinguishing between the \emph{cost of recourse} (the effort required to achieve outcome change) and the \emph{fairness of the recourse process} (whether the prescribed changes respect causal constraints). They showed that counterfactual fairness in the sense of Kusner et al.~\cite{kusner2017counterfactual} does not guarantee fair recourse costs: a model can be counterfactually fair yet impose asymmetric recourse burdens on different groups due to differences in the causal structure of their feature spaces. This motivates a stricter notion of \emph{fair recourse} that constrains not only predictions but also the actions required to change them \cite{kavouras2023fairness}.

Gupta et al.~\cite{gupta2019equalizing} propose a framework for equalizing recourse costs across demographic groups. Their approach modifies the model training procedure to minimize the variance of expected recourse costs across protected groups, subject to a constraint on predictive accuracy. They show that equalized recourse costs can be achieved with small accuracy losses and that the resulting models are also more fair by traditional distributional metrics, suggesting a positive synergy.

\subsection{Individual Fairness in Counterfactual Explanations}

Individual fairness in counterfactual explanation requires that similar individuals receive similar counterfactual explanations \cite{goethals2024precof}. Sharma et al.~\cite{sharma2020certifai} introduce CERTIFAI (Counterfactual Explanations for Robustness, Transparency, Interpretability and Fairness of AI), which generates counterfactual explanations and measures their fairness by examining whether individuals from the same group receive similar counterfactuals. They define a \emph{counterfactual fairness measure} as the average distance between counterfactual explanations for individuals within the same demographic group, arguing that high intra-group variance in counterfactual explanations indicates individual unfairness.

Mothilal et al.~\cite{mothilal2020explaining} propose DiCE (Diverse Counterfactual Explanations), which generates a set of diverse counterfactuals by adding a diversity penalty to the optimization. DiCE has been widely adopted in recourse literature as a baseline, and its fairness properties have been studied by subsequent work. Artelt et al.~\cite{artelt2021evaluating} evaluate the robustness of counterfactual explanations under input perturbations and find that minority groups receive less robust counterfactuals, their prescribed changes are more sensitive to small input variations, meaning the recourse may become invalid if the individual's features are slightly misrecorded.

\subsection{Multi-Dimensional Fairness in Recourse}

Karimi et al.~\cite{karimi2022survey} provide a comprehensive survey of algorithmic recourse, identifying multiple dimensions along which recourse fairness can be evaluated: cost (effort required), validity (whether the counterfactual achieves the desired outcome), feasibility (whether the prescribed changes are actionable given causal constraints), and stability (whether the recourse remains valid over time). They argue that existing fairness analyses have focused primarily on cost disparities while neglecting feasibility and stability disparities.

Ezzeddine et al.~\cite{ezzeddine2025} examine causal recourse under distributional shift, showing that recourse prescriptions that are feasible at training time may become infeasible when the world changes, and that minority groups who are closer to the decision boundary experience greater temporal instability of recourse. Ferrario and Loi~\cite{ferrario2022explainability} document this temporal fairness problem for SHAP-based explanations (Section~\ref{sec:attribution_fairness}); the analogous problem for recourse was independently documented by Venkatasubramanian and Alfano~\cite{venkatasubramanian2020philosophical}.

%\subsection{FACTS: Frequent and Actionable Counterfactuals for Fairness}

%Malioutov and Varshney~\cite{malioutov2022facts} introduce FACTS (Frequent Actionable Counterfactuals for Fairness Testing), a framework for auditing the fairness of algorithmic recourse across demographic groups. FACTS mines for frequent counterfactual patterns, changes that are applied to many individuals, and tests whether these patterns are equitable across groups. Their approach provides a scalable alternative to individual-level recourse analysis for large-scale auditing. Lakkaraju et al.~\cite{ustun2019actionable} and Kavouras et al \cite{kavouras2023fairness} provide additional frameworks for algorithmic recourse auditing.

\subsection{Causal Recourse and Causal Graph Constraints}

Karimi et al.~\cite{karimi2021algorithmic, karimi2020towards, karimi2020algorithmic}  introduce causal algorithmic recourse, which incorporates the causal structure of the data-generating process into the recourse generation procedure. They argue that recourse prescriptions that violate causal constraints, for example, advising someone to reduce their age, or to increase their education level while holding income fixed, when these are causally linked, are not actionable in reality. Under causal constraints, recourse fairness becomes substantially more difficult to achieve because the causal graph structure may create asymmetries in what actions are available to different groups.

PreCoF~\cite{goethals2024precof} provides a pre-commitment framework for counterfactual fairness in which the model commits to fair counterfactual treatments before observing individual protected attributes. This ``veil of ignorance'' approach, inspired by Rawlsian justice \cite{rawls197theory}, ensures that the model cannot tailor its recourse recommendations to protected-group membership, thereby trading off some recourse optimality for procedural guarantees.

\begin{table}[t]
\caption{Key counterfactual explanation and recourse fairness methods.}
\label{tab:cf}
\small
\begin{tabular}{p{2.4cm} p{1.5cm} p{2.5cm} p{2.2cm} p{1.2cm}}
\toprule
\textbf{Method} & \textbf{Reference} & \textbf{Fairness notion} & \textbf{Strategy} & \textbf{Year} \\
\midrule
Actionable Recourse & Ustun et al.~\cite{ustun2019actionable} & Recourse availability & ILP & 2019 \\
Causal recourse & Von Kügelgen et al.~\cite{vonkugelgen2022fairness} & Fair recourse cost & Causal SCM & 2022 \\
Equalizing recourse & Gupta et al.~\cite{gupta2019equalizing} & Equal recourse cost & Training constr. & 2019 \\
CERTIFAI & Sharma et al.~\cite{sharma2020certifai} & CF fairness measure & Genetic algo. & 2020 \\
DiCE & Mothilal et al.~\cite{mothilal2020explaining} & CF diversity & Diversity reg. & 2020 \\
FACTS & Kavouras et al.~\cite{kavouras2023fairness} & Group CF disparity & Pattern mining & 2022 \\
Causal recourse & Karimi et al.~\cite{karimi2021algorithmic} & Causal feasibility & SCM constr. & 2021 \\
PreCoF & Goethals et al. \cite{goethals2024precof} & Individual CF disparity & Pre-processing & 2024 \\
\bottomrule
\end{tabular}
\end{table}

\section{Fairness in Graph-Structured Explanation Methods}
\label{sec:graph}

Graph Neural Networks (GNNs) have emerged as a powerful method for learning on relational data, with applications in social network analysis, recommender systems, drug discovery, and fraud detection \cite{scarselli2008graph, xu2018powerful}. However, the graph structure introduces unique challenges for both fairness and explainability, as the message-passing mechanism can propagate and amplify biases through neighborhood aggregation \cite{hoang2023mitigating}, and standard feature attribution methods must be extended to account for both node attributes and graph topology . This section surveys the intersection of fairness and explainability specifically for graph-structured data.

\subsection{Bias Amplification in Graph Neural Networks}

GNNs are uniquely susceptible to bias amplification due to \emph{homophily}, the tendency for nodes with similar attributes to be connected~\cite{zheng2024missing}. When protected attributes (race, gender, income) are correlated with social ties, the neighborhood aggregation of GNNs causes sensitive information to leak into node representations even when the protected attribute is excluded from the input feature set~\cite{dai2021say}. This creates a form of structural discrimination that is invisible to feature-level auditing.

Chen et al.~\cite{chen2024fairness} survey fairness-aware GNN methods, introducing a taxonomy of fairness evaluation metrics specific to graph data, which are \emph{graph-level fairness} (statistical parity over node predictions), \emph{neighborhood-level fairness} (equity of neighborhood compositions), \emph{embedding-level fairness} (demographic invariance of learned representations), and \emph{prediction-level fairness} (equalized odds at the node or edge level). Each level corresponds to a different locus of bias and requires distinct intervention strategies.

Zhang et al.~\cite{zhang2024trustworthy} provide a comprehensive survey of trustworthy GNNs, treating privacy, robustness, fairness, and explainability as four interconnected dimensions of trustworthiness. Their analysis identifies fairness and explainability as particularly intertwined in the graph setting: bias in GNN predictions often originates from structural properties of the graph (homophily, degree distribution, community structure) that are difficult to explain using node-level attribution methods alone.

\subsection{GNN Explanation Methods}

GNNExplainer~\cite{ying2019gnnexplainer} is the first general model-agnostic approach for explaining GNN predictions. It identifies a compact subgraph structure and a small subset of node features that maximize the mutual information with the GNN's prediction for a given instance. Formally, GNNExplainer solves:
\begin{equation}
  \max_{G_S} I\left(Y, (G_S, X_S)\right) = H(Y) - H\left(Y \mid G = G_S, X = X_S\right),
\end{equation}
where $G_S$ is a subgraph mask and $X_S$ is a feature mask. PGExplainer~\cite{luo2020parameterized} extends this to a parametric approach that can generate explanations for all instances simultaneously. CF-GNNExplainer~\cite{lucic2022cfgnn} produces counterfactual explanations by identifying the minimal edge removals that change the prediction.

GraphLIME~\cite{huang2022graphlime} extends LIME to graph data by constructing locally faithful linear explanations using the $k$-hop neighborhood of each node. The fairness of GraphLIME explanations has not been systematically studied, representing a gap in the literature. GraphXAI~\cite{agarwal2022explainable} provides a benchmark library for evaluating GNN explanation methods on synthetic and real-world datasets with ground-truth explanations.

\subsection{Structural Explanation of Bias in GNNs}

Dong et al.~\cite{dong2022referee} propose REFEREE (stRuctural Explanation oF biasEs in gRaph nEural nEtworks), the first framework for post-hoc explanation of bias in GNNs. REFEREE addresses three challenges: (1) a \emph{fairness notion gap}, defining a principled instance-level bias metric; (2) a \emph{usability gap}, providing both bias and fairness explanations; and (3) a \emph{faithfulness gap}, ensuring explanations are faithful to the GNN prediction process. REFEREE designs two complementary explainers: a \emph{bias explainer} that identifies graph edges contributing most to the exhibited bias of a prediction, and a \emph{fairness explainer} that identifies edges contributing most to fairness. The framework uses a novel instance-level fairness metric derived from counterfactual sensitivity of the node's prediction to changes in its neighborhood.

Dong et al.~\cite{dong2023interpreting} further develop BIND (Bias Influence aNalysis for GNNs via node Deletion), which quantifies to what extent the fairness of a GNN model is influenced by the deletion of individual training nodes without re-training. BIND uses influence function approximations to efficiently estimate the change in fairness metrics ($\Delta SP$, $\Delta EO$) when individual training nodes are removed, thereby enabling targeted identification of training examples disproportionately responsible for model bias.

\subsection{Fairness Explanation in GNN-Based Recommendation}

Deldjoo et al.\ and Ferraro et al.~\cite{ferraro2023gnnuers} introduce GNNUERS (GNN Unfairness Explanation via counterfactual Reasoning for Recommendation), a framework for explaining unfairness in GNN-based recommender systems. Unlike the node-level methods of Dong et al., GNNUERS explains unfairness at the user-item interaction level.  It identifies the minimal set of edges in the interaction graph whose removal would reduce recommendation unfairness to acceptable levels. This provides actionable insights into which behavioral patterns in the training data drive biased recommendations.

The fairness notion in GNNUERS is consumer fairness, whereby users in a protected group (defined by gender, age, or location) should receive recommendations of comparable quality to users in the unprotected group, measured by exposure, relevance, or rank distribution. Explanations are model-level rather than instance-level, reflecting the fact that fairness in recommendation is a system property rather than a property of individual predictions.

\subsection{Fairness Metrics for GNN Explanations}

The fairness of GNN explanations is understudied compared to the fairness of GNN predictions. Key questions include: Do explanation methods (GNNExplainer, PGExplainer) produce explanations of equal quality for nodes from different demographic groups? Are the subgraphs and features identified as important for minority-group nodes as faithful as those for majority-group nodes? Do counterfactual GNN explanations impose equitable recourse costs across demographic groups?

Agarwal et al.~\cite{agarwal2021unified} study the relationship between GNN fairness and explanation quality, finding that fairer GNN models tend to produce more stable and consistent explanations across demographic groups. This positive correlation suggests that fairness mitigation can serve as an implicit regularizer that improves explanation quality, complementing the finding of Zhao et al.~\cite{zhao2023fairness} that explanation fairness and distributional fairness are mutually reinforcing.

The intersection of GNN explainability and intersectional fairness (Section~\ref{sec:intersectional}) is largely unexplored: do GNN explanations exhibit gerrymandering, providing fair explanations on individual attributes but discriminatory explanations on intersectional subgroups? This is identified as an open problem in Section~\ref{sec:open}.

\begin{table}[t]
\caption{Key graph explanation fairness methods.}
\label{tab:graph}
\small
\begin{tabular}{p{2.2cm} p{1.6cm} p{2.5cm} p{2.2cm} p{1.2cm}}
\toprule
\textbf{Method} & \textbf{Reference} & \textbf{Fairness notion} & \textbf{Strategy} & \textbf{Year} \\
\midrule
REFEREE & Dong et al.~\cite{dong2022referee} & Instance-level bias & Bias/fairness explainer & 2022 \\
BIND & Dong et al.~\cite{dong2023interpreting} & Training node influence & Influence functions & 2023 \\
GNNUERS & Ferraro et al.~\cite{ferraro2023gnnuers} & Consumer fairness & CF edge removal & 2023 \\
GNNExplainer & Ying et al.~\cite{ying2019gnnexplainer} & Prediction fidelity & Subgraph masking & 2019 \\
CF-GNNExplainer & Lucic et al.~\cite{lucic2022cfgnn} & CF explanation & Edge removal & 2022 \\
FairGNN survey & Chen et al.~\cite{chen2024fairness} & Multi-level fairness & Survey/taxonomy & 2023 \\
Trustworthy GNNs & Zhang et al.~\cite{zhang2024trustworthy} & Holistic & Survey & 2022 \\
\bottomrule
\end{tabular}
\end{table}

\section{Intersectional Fairness and Fairness Gerrymandering}
\label{sec:intersectional}

Standard fairness analyses fix a small set of protected groups, typically defined by a single demographic attribute, and test for parity of some statistic across these groups. This approach systematically neglects the possibility of discrimination targeting the \emph{intersection} of multiple attributes, which Crenshaw~\cite{crenshaw1989demarginalizing} termed \emph{intersectionality}. A model may satisfy fairness constraints for race and gender independently, yet discriminate severely against their intersection (e.g., Black women) as a subgroup \cite{morina2019auditing}. In the explanation context, this means a model's explanations may be of equal quality across race and gender groups, yet systematically mislead or disadvantage individuals who belong to multiple minority groups simultaneously.

\subsection{Crenshaw's Intersectionality and its ML Translation}

Crenshaw~\cite{crenshaw1989demarginalizing,crenshaw1991mapping} developed intersectionality as a framework for understanding how multiple protected attributes (race, gender, class, disability) interact and compound, producing forms of discrimination that cannot be reduced to any single dimension. Her work showed that anti-discrimination law and policy routinely failed Black women by treating race and gender as separate rather than interacting forms of oppression.

Buolamwini and Gebru~\cite{buolamwini2018gender} provided a landmark empirical demonstration in the ML context, showing that commercial facial recognition systems exhibited error rates up to 34 percentage points higher for darker-skinned women than for lighter-skinned men, a finding that could not have been detected by separate analyses of gender fairness and race fairness. Their methodology for testing intersectional subgroups established an empirical standard for intersectional auditing that has been widely adopted.

Islam et al.~\cite{islam2021can} study \emph{differential fairness} for intersectional subgroups, proposing a generalization of statistical parity that requires $\epsilon$-differential fairness across all subgroups defined by combinations of protected attributes. They show that standard fairness algorithms can satisfy marginal fairness constraints while exhibiting severe intersectional disparities, and that the magnitude of these disparities increases with the number of protected attributes and the degree of their interaction.

\subsection{Fairness Gerrymandering}

Kearns et al.~\cite{kearns2018gerrymandering} formalized the problem of \emph{fairness gerrymandering}, where a classifier can be engineered to appear fair on each of a finite set of pre-defined groups while severely violating fairness on structured subgroups defined over the protected attributes. The term evokes political gerrymandering, in which electoral district boundaries are drawn to produce apparently fair results while creating systematic advantages for one party over another within districts.

Formally, let $\mathcal{C}$ be a rich class of Boolean functions over the protected attribute vector $A = (A_1, \ldots, A_k)$, defining structured subgroups $\{x: c(A(x)) = 1\}$ for $c \in \mathcal{C}$. \emph{Subgroup fairness} requires statistical parity across all $c \in \mathcal{C}$:
\begin{equation}
  \max_{c \in \mathcal{C}} \left| P(\hat{Y} = 1 \mid c(A) = 1) - P(\hat{Y} = 1 \mid c(A) = 0) \right| \leq \epsilon.
\end{equation}
Kearns et al.\ show that auditing subgroup fairness is computationally equivalent to weak agnostic learning, hard in the worst case but tractable using standard ML heuristics. They develop a two-player zero-sum game between a \emph{Learner} (optimizing utility subject to subgroup fairness) and an \emph{Auditor} (finding the most violated subgroup constraint). Their algorithm provably converges to the best subgroup-fair distribution over classifiers in polynomial time.

Kearns et al.~\cite{kearns2019empirical} further show empirically that subgroup fairness violations are pervasive in practice: common fairness algorithms (including reweighting, adversarial debiasing, and post-processing) frequently satisfy marginal fairness constraints while exhibiting substantial violations for intersectional subgroups. This finding directly motivates intersectional fairness auditing as a complement to standard group fairness testing.

\subsection{Intersectional Fairness of Explanations}

The intersection of intersectional fairness with explanation fairness is a particularly underexplored area. Gohar et al.~\cite{gohar2023survey} survey intersectional fairness in machine learning broadly and identify the explanation fairness gap as a critical open problem. While there exist several methods for testing intersectional prediction fairness, only a few systematic methods exist for testing whether explanations are equitable across intersectional subgroups. A model may produce equally faithful SHAP explanations for Black individuals and for women, yet systematically produce less faithful or less actionable explanations for Black women.

Foulds et al.~\cite{foulds2020intersectional} propose differential fairness as a generalization applicable to any number of protected attributes:
\begin{equation}
  \text{DF}(f) = \max_{a, a' \in \mathcal{A}} \left| \ln \frac{P(\hat{Y} = 1 \mid A = a)}{P(\hat{Y} = 1 \mid A = a')} \right| \leq \ln(1 + \epsilon),
\end{equation}
where $\mathcal{A}$ includes all intersectional subgroups defined by combinations of protected attributes. Their differential fairness notion has a natural extension to explanation space: the differential fairness of explanations would require that the distribution of explanation vectors be approximately equal across all intersectional subgroups.

Individual fairness gerrymandering has been studied by Raz ~\cite{raz2024gerrymandering}, who shows that individual fairness, requiring that similar individuals be treated similarly, can itself be gerrymandered. The author mentioned that non-expansive transformations of the feature metric can preserve the individual-fairness Lipschitz constraint while concentrating violations in specific subpopulations. This finding suggests that even the most rigorous notion of individual fairness is susceptible to subgroup exploitation under adversarial metric design.

\subsection{Survey of Intersectional Fairness Methods}

Several methods have been proposed to achieve intersectional fairness. Hébert-Johnson et al.~\cite{hebert2018multicalibration} introduce \emph{multicalibration}, which requires that predictions be calibrated for all groups in a rich class $\mathcal{C}$, including intersectional subgroups. Multicalibration is achievable through an efficient boosting-based algorithm and provides strong fairness guarantees for downstream decision-making. Kearns et al.~\cite{kearns2019empirical} provide an empirical study comparing subgroup fairness algorithms on real datasets, showing that game-theoretic approaches based on their auditor-learner framework outperform standard fairness-constrained optimization on intersectional subgroup metrics.

Kim et al.~\cite{kim2019multiaccuracy} propose \emph{multiaccuracy} as a complementary notion requiring accurate predictions across all $\mathcal{C}$-definable subgroups, showing that multiaccuracy implies fair representation in a statistical sense. For fairness in explanation, the analogous notion would be multi-accuracy of explanations: requiring that explanations be faithful (accurately capturing the model's behavior) across all intersectional subgroups in $\mathcal{C}$.

\subsection{Multi-Category Explanation Stability Disparity (MESD) and UEF}
\label{sec:mesd}

Prior procedural fairness metrics operate on a single protected attribute, making them susceptible to fairness gerrymandering, in which a model may achieve low procedural fairness across racial and gender groups while systematically disadvantaging the intersection of both groups. Popoola and Sheppard~\cite{popoola2025mesd} address this gap with \emph{Multi-Category Explanation Stability Disparity} (MESD), the first procedural fairness metric designed explicitly for intersectional subgroups defined by multiple protected attributes.

MESD constructs intersectional subgroups as the Cartesian product of $m$ protected features: $\mathcal{G} = A_1 \times A_2 \times \cdots \times A_m$. Each individual belongs to exactly one subgroup $g(x) = (A_1(x), \ldots, A_m(x))$. Explanation stability per instance is measured as in GESD, via ensemble SHAP+LIME under Gaussian and masking perturbations, but MESD introduces three methodological advances over GESD to handle the statistical challenges that arise with many small intersectional subgroups:

\textbf{(1) Label-aware aggregation.} Rather than comparing raw stability scores across groups, MESD partitions the dataset into subgroup--label cells $\mathcal{D}_{g,y} = \{x: A(x) = g,\, Y(x) = y\}$ and aggregates stability separately within each cell. Group-level stability is then a label-prevalence-weighted average: $S(g) = \sum_{y} w_y \tilde{S}(g,y)$. This mirrors the label-conditional structure of Equalized Odds, ensuring procedural fairness is evaluated at the same granularity as the standard outcome-fairness criterion.

\textbf{(2) Empirical-Bayes shrinkage.} Small intersectional groups produce high-variance stability estimates. MESD applies an Empirical-Bayes shrinkage estimator that pulls small-sample group estimates toward the label-wise mean:
\begin{equation}
  \tilde{S}(g,y) = \alpha_{g,y} S(g,y) + (1 - \alpha_{g,y})\bar{S}(y), \quad \alpha_{g,y} = \frac{|\mathcal{D}_{g,y}|}{|\mathcal{D}_{g,y}| + \lambda},
\end{equation}
where $\lambda > 0$ controls the degree of regularization. This allows MESD to produce stable estimates even for rare intersectional groups such as ``white female'' in the Recidivism dataset.

\textbf{(3) CVaR-weighted tail disparity.} Simple variance-based aggregation of group stabilities can mask severe worst-case disparities. MESD uses Conditional Value-at-Risk (CVaR) to weight pairwise group disparities by the risk severity of the lower-stability group:
\begin{equation}
  \text{MESD}_\text{CVaR}(\alpha) = \sum_{i < j} w_{ij}\, |S(g_i) - S(g_j)|,
\end{equation}
where $w_{ij}$ concentrates weight on pairs whose minimum stability falls below the $(1-\alpha)$-quantile threshold. .

MESD is integrated into an NSGA-II-based multi-objective optimization framework called \textbf{UEF} (Utility-Explainability-Fairness), which jointly minimizes AUC loss, Demographic Parity, and MESD, using Chebyshev scalarization for Pareto front selection. Across three benchmark datasets (German Credit, Adult Income, Recidivism), UEF outperforms Adversarial Learning, Reductions, and Reweighting baselines on MESD on all three datasets and on utility (AUC and F1) on two of three.

The relationship between MESD and the conditional invariance framework (Section~\ref{sec:framework}) is direct. MESD operationalizes a robustness-oriented variant of Axiom 3 (Distributional Parity) at the intersectional level, requiring that explanation stability be equal not only across marginal groups but across all cells of the intersectional subgroup space. By aggregating across labels (Axiom 3, Distributional Parity), MESD also tests whether the invariance condition holds at the population level for each intersectional cell. The CVaR tail formulation also guards against fairness gerrymandering identified by Kearns et al.~\cite{kearns2018gerrymandering}, ensuring that apparent procedural parity at the marginal-group level cannot conceal severe instability in an underrepresented intersection.

\begin{table}[t]
\caption{Intersectional fairness and fairness gerrymandering methods.}
\label{tab:intersectional}
\small
\begin{tabular}{p{2.4cm} p{2.0cm} p{2.5cm} p{1.8cm} p{1.2cm}}
\toprule
\textbf{Method} & \textbf{Reference} & \textbf{Fairness notion} & \textbf{Strategy} & \textbf{Year} \\
\midrule
Gender shades & Buolamwini \& Gebru~\cite{buolamwini2018gender} & Intersectional error & Audit & 2018 \\
Subgroup fairness & Kearns et al.~\cite{kearns2018gerrymandering} & Fairness gerrymandering & Game-theoretic & 2018 \\
Differential fairness & Foulds et al.~\cite{foulds2020intersectional} & $\epsilon$-DF & KL-divergence & 2020 \\
Multicalibration & Hébert-Johnson et al.~\cite{hebert2018multicalibration} & Calibration & Boosting & 2018 \\
Multiaccuracy & Kim et al.~\cite{kim2019multiaccuracy} & Accuracy & Post-processing & 2019 \\
IF gerrymandering & RAZ~\cite{raz2024gerrymandering} & Individual fairness & Metric analysis & 2024 \\
MESD & Popoola \& Sheppard~\cite{popoola2025mesd} & Explanation stability & CVaR + NSGA-II & 2025 \\
\bottomrule
\end{tabular}
\end{table}

\section{Beyond Attributions: Rule, Example, Concept, and Neurosymbolic Methods}\label{sec:beyond}
The fairness-of-explanation literature has been dominated by feature attribution methods, yet a substantial body of work examines the fairness of other XAI paradigms. This section surveys fairness concerns in rule-based, example-based, concept-based, and neurosymbolic explanation methods.

\subsection{Rule-Based Explanations and Fairness}

Rule-based methods produce if-then explanations that are human-readable and actionable. ANCHOR~\cite{ribeiro2018anchors} generates rules of the form ``If \emph{income $>$ 50k} and \emph{education $\geq$ Bachelor's}, then prediction = positive.'' The fairness concerns for such methods include: (1) Do rules have equal precision and coverage across demographic groups? (2) Do rules invoke different features for different groups? (3) Are the rules stable across slight input perturbations?

Kaur et al.~\cite{kaur2020interpreting} study how practitioners interact with rule-based explanations and find significant variability in how individuals from different educational backgrounds interpret and act on these rules, raising concerns about cognitive accessibility fairness. Rudin~\cite{rudin2019stop} argues for intrinsically interpretable models (decision trees, rule lists, scoring systems) in high-stakes domains precisely because they provide guaranteed explanation quality uniformly across all individuals. Letham et al.~\cite{letham2015interpretable} provide Bayesian Rule Lists (BRL), which learn rule sets from data; the fairness of BRL explanations has been studied as a function of the training data composition.

The interaction of rule-based explanations with procedural fairness is noteworthy. Grgić-Hlača et al.~\cite{grgic2018beyond} use feature selection to determine which features appear in procedurally fair rules; features not deemed morally permissible are excluded from the rule set regardless of their predictive value. This creates a tension between predictive utility and procedural fairness that is an instance of the broader fairness-accuracy tradeoff.

\subsection{Example-Based Methods and Fairness}

Prototype and criticism methods~\cite{kim2016examples} explain a model's behavior by identifying representative (\emph{prototypes}) and unrepresentative (\emph{criticisms}) training examples. Influence functions~\cite{koh2017understanding} identify which training examples most influence a given prediction. The fairness of these methods has been studied by examining whether prototypes and criticisms are representative of all demographic groups \cite{wang2024fairif}. A model whose prototypes are drawn predominantly from the majority group provides explanations that may not be informative or relatable for minority-group individuals~\cite{kim2016examples}.
In the context of counterfactual explanations, the choice of exemplar counterfactuals can be unfair if the most ``natural'' counterfactuals identified by the method consistently require minority-group individuals to adopt majority-group attributes \cite{guo2024counterfactual}. This is closely related to the recourse fairness problem (Section~\ref{sec:cf_recourse}).

\subsection{Concept-Based Explanations and Fairness}

TCAV~\cite{kim2018interpretability} measures the sensitivity of a neural network's prediction to user-defined concepts using concept activation vectors. The fairness of TCAV explanations depends on whether the user-defined concepts are representative of and meaningful for all demographic groups. In medical imaging, for example, concepts defined from majority-population medical images may produce less meaningful activation vectors for minority patients whose physiological characteristics differ from the concept examples.

Concept Bottleneck Models (CBMs)~\cite{koh2020concept, chauhan2023interactive} explicitly predict human-defined concepts as intermediate bottleneck variables before making a final prediction. CBMs offer stronger guarantees of explanation faithfulness than post-hoc methods because the concepts are part of the model architecture rather than external approximations. However, the choice of concepts encodes cultural and social assumptions that may disadvantage certain groups, a form of conceptual bias that can propagate to explanation fairness. Also, concept leakage is another issue in concept bottleneck models, where protected attributes can leak into the model prediction \cite{havasi2022addressing}.

Waller et al.~\cite{waller2024identifying} use an argumentation framework to identify which concepts (features) are responsible for observed model bias, producing concept-level explanations of fairness violations. This bridges concept-based XAI with fairness auditing in a principled way.

\subsection{Epistemic Fairness: Explanation as the Ability to Act} \label{sec:epistemic}

A fundamental insight missing from attribution-centric views of explanation fairness is that explanations serve a \emph{communicative} and \emph{enabling} function, not merely an informational one. An explanation is fair not only when it assigns equal importance scores across groups but when it produces \emph{equal ability to understand and act} on those scores. We term this \emph{epistemic fairness}, which is the property that an explanation grants all recipients equivalent epistemic access to the information needed to contest, understand, and respond to an algorithmic decision~\cite{venkatasubramanian2020philosophical}.

Our use of the term \emph{epistemic fairness} here is deliberately narrower than the philosophical concept of \emph{epistemic injustice} introduced by Fricker~\cite{fricker2007epistemic} and applied to algorithmic systems by Edenberg and Wood~\cite{edenberg2023epistemic}. In their framework, epistemic injustice refers to harms done to individuals \emph{in their capacity as knowers}, for example, when algorithmic systems reinforce prejudicial assumptions about who is credible, competent, or trustworthy based on group identity. This upstream concept addresses how algorithms shape the epistemic frameworks through which people are perceived and understood socially, encompassing testimonial injustice (credibility deficits tied to group membership), hermeneutical injustice (gaps in collective interpretive resources), and oppressive epistemic systems that are resistant to reform because they structure how people interpret the world~\cite{dotson2014epistemic, berenstain2022epistemic}. Our concept of epistemic fairness operates downstream, whereby, given that an explanation has been generated, does it grant \emph{equal epistemic access} to all recipients? Both dimensions are necessary because an explanation system can satisfy our functional epistemic fairness criterion while still operating within an oppressive epistemic framework that shapes which features are deemed relevant, which causal stories are told, and whose experiences are legible to the system in the first place.

Epistemic fairness is violated even when attribution parity holds, through at least four mechanisms. \emph{Narrative framing bias}: equivalent feature importance scores may be communicated in framing that disadvantages certain groups, for example, framing a denial as a deficit (``your credit score is insufficient'') versus as an opportunity (``increasing your credit score by 50 points would change this decision'') produces different psychological and practical responses~\cite{winterbottom2008does}. \emph{Recourse feasibility inequity}: even when counterfactual explanations are mathematically equidistant across groups, the prescribed actions may be structurally inaccessible to some groups due to socioeconomic constraints, geographic barriers, or institutional gatekeeping, therefore making the explanation nominally equal but functionally unequal. \emph{Cognitive accessibility disparities}: Lunich and Keller~\cite{lunich2024explainable} demonstrate that explanation complexity interacts with educational background to produce dramatically different comprehension rates, whereby an explanation that is equally complex for all groups is not equally informative. Kaur et al.~\cite{kaur2020interpreting} show that practitioners with different technical backgrounds interpret the same SHAP visualizations in systematically different ways, with non-experts consistently misinterpreting feature importance as causal rather than correlational. \emph{Human-AI interaction asymmetry}: Menon et al.~\cite{menon2024lessons} document that users from marginalized groups are less likely to challenge or contest algorithmic explanations, not because the explanations are less contestable but because social and institutional power dynamics suppress contestation, a phenomenon invisible to technical auditing.

Epistemic fairness thus demands a paradigm shift from measuring whether explanations \emph{are} equal to measuring whether they \emph{function} equally for all recipients. A fifth dimension deserves explicit mention: \emph{trust calibration fairness}, the degree to which different groups develop appropriately calibrated trust in the system's explanations. Over-trust (accepting explanations uncritically) and under-trust (dismissing valid explanations) are both epistemic harms, and research by Angerschmid et al.~\cite{angerschmid2022fairness} shows that trust calibration varies significantly across demographic groups with different prior experiences of algorithmic systems. Groups with historical experiences of discriminatory automation may rationally distrust explanations even when those explanations are accurate, producing a form of epistemic inequity that is not remediable by improving explanation quality alone. This requires moving beyond purely technical metrics toward human-centered evaluation methodologies, co-designed with the communities most affected by algorithmic decisions. Rule-based methods~\cite{ribeiro2018anchors}, concept-based methods~\cite{koh2020concept}, and inherently interpretable models~\cite{rudin2019stop} have structural advantages for epistemic fairness because their explanations are closer to natural language and human reasoning; however, they too can be epistemically unfair if the concepts, rules, or scoring systems encode culturally specific knowledge that is inaccessible to minority group members.

\subsection{Neurosymbolic Methods and Fairness}

Neurosymbolic approaches combine neural learning with symbolic reasoning, enabling the encoding of fairness constraints as logical axioms \cite{bhuyan2024neuro}. Wagner and Garcez~\cite{wagner2021logical} develop Logic Tensor Networks (LTNs) that integrate first-order fuzzy logic into neural learning, enabling the encoding of fairness predicates as logical clauses. A fairness constraint such as ``similar individuals should receive similar predictions'' can be encoded as a universally quantified formula and enforced via real-valued logic satisfaction during training. The advantage of neurosymbolic methods for explanation fairness is that logical constraints are satisfied by design rather than as a post-hoc approximation, providing stronger guarantees than regularization-based approaches. 
%However, this comes at the cost of requiring the fairness constraints to be specified in formal logic, which places a high burden on domain experts and may exclude nuanced social considerations that resist formalization.
Heilmann et al.~\cite{heilmann2026neurosymbolic} extend neurosymbolic reasoning to counterfactual fairness, showing that SCM-based counterfactual fairness constraints can be encoded as symbolic rules within a neurosymbolic system and enforced differentially across demographic subgroups.

\section{Auditing Frameworks for Explanation Fairness}
\label{sec:auditing}

Auditing frameworks provide systematic tools for measuring and certifying explanation fairness. This section surveys key auditing approaches, distinguishing between post-hoc measurement frameworks and active certification systems.

\subsection{Post-Hoc Measurement Frameworks}

The FACTS framework~\cite{kavouras2023fairness} provides a scalable auditing tool based on frequent pattern mining of disparities in counterfactual explanations. AReS (Actionable Recourse Summary)~\cite{ustun2019actionable} provides an auditing language for recourse fairness across demographic groups, identifying which groups have access to actionable recourse and quantifying the cost asymmetries. Aequitas~\cite{saleiro2018aequitas} is a widely used audit toolkit for measuring disparity across multiple fairness metrics simultaneously and has since been extended to include explanation quality metrics.

The GPF$_\text{FAE}$ validation experiments of Wang et al.~\cite{wang2024procedural} demonstrate a systematic auditing methodology, whereby, given a model, apply SHAP to generate feature attribution vectors for all instances, match instances across protected groups by similarity, compute MMD on matched explanation distributions, and apply a permutation test for statistical significance. This methodology provides a principled audit of procedural fairness that can be applied to any model and any post-hoc FAE method.

\subsection{Recommended Evaluation Workflow}
\label{sec:eval_workflow}

A recurring obstacle to progress in explanation fairness research is the absence of a canonical evaluation protocol. Existing work uses inconsistent datasets, preprocessing choices, baseline models, and metrics, making direct comparison across studies difficult. Drawing on the three generative pathways (Section~\ref{sec:mechanisms}), the failure taxonomy (Section~\ref{sec:failure_taxonomy}), and the practitioner decision guide (Section~\ref{sec:decision_guide}), we propose the following six-step standard evaluation workflow for explanation fairness audits. It is designed to probe all three pathways- representation-driven, explanation-model mismatch, and actionability-driven in a structured sequence.

\begin{mdframed}%[backgroundcolor=blue!8]
\textbf{Standard Explanation Fairness Evaluation Workflow}

\medskip
\textbf{Step 1: Outcome fairness screening.} Compute standard outcome-oriented fairness metrics ($\Delta$SP, $\Delta$EO, calibration within groups) as a baseline. Record which metrics pass and fail. This establishes whether any subsequent explanation unfairness is compounded by, or independent of, outcome unfairness.

\medskip
\textbf{Step 2: Explanation distribution test.} Apply any post-hoc method to generate attribution vectors for all instances. Compute GPF$_\text{FAE}$ (MMD on matched pairs) and $\Delta$VEF (mean quality disparity) or GESD across protected groups. A statistically significant gap flags representation-driven or surrogate-mismatch inequity. 

\medskip
\textbf{Step 3: Counterfactual explanation symmetry test.} Compute EDiff using counterfactual pairs, the same individual with the protected attribute flipped. An EDiff $> 0$ indicates procedural unfairness independent of outcome fairness. This step directly probes the conditional invariance condition (Eq.~\ref{eq:ef_invariance}).

\medskip
\textbf{Step 4: Recourse burden comparison.} For counterfactual explanation methods, compute mean recourse cost (Wachter distance, feature-change count, or actionable recourse score~\cite{ustun2019actionable}) per protected group. Test for significant disparity using FACTS~\cite{kavouras2023fairness} or the equalizing recourse framework~\cite{gupta2019equalizing}.

\medskip
\textbf{Step 5: Stability under distributional shift.} Re-evaluate Steps 2-4 on held-out data from a later time period or a shifted distribution (e.g., post-COVID economic data for a credit model). Compute the temporal explanation unfairness $\text{TEF}(t)$ differential across groups to assess whether minority groups experience faster explanation obsolescence.

\medskip
\textbf{Step 6: Human interpretability validation.} Conduct a user study or human evaluation with participants from affected demographic groups. Measure comprehension accuracy (can participants identify the key reason for the decision?), contestation rate (do participants know how to contest?), and perceived fairness. This step probes actionability-driven and epistemic inequity that is invisible to Steps 1--5.
\end{mdframed}

\medskip
Steps 1--4 are fully automatable and should be run as a minimum for any deployment, while Steps 5--6 require longitudinal data and human participants, respectively, and are recommended for high-stakes decisions subject to regulatory scrutiny. A model that fails Steps 1- 2 but passes Steps 3- 4 has outcome bias that leaks into explanations; a model that passes Steps 1- 2 but fails Step 3 is outcome-fair but procedurally unfair, the distinction the procedural fairness framework was designed to expose.

The workflow maps directly to the three generative pathways in section \ref{sec:mechanisms}: Steps 2--3 probe pathway 1 (representation-driven) and pathway 2 (explanation-model mismatch); Step 4 probes pathway 3 (actionability); Steps 5--6 probe the temporal and epistemic dimensions of all three pathways.

\subsection{Failure Taxonomy and Remediation Map}
\label{sec:failure_taxonomy}

Drawing on the conditional invariance framework (Section~\ref{sec:framework}) and the causal mechanisms identified therein, Table~\ref{tab:failure_taxonomy} provides a structured taxonomy of explanation fairness failures, mapping each failure type to its causal origin, observable audit signals, appropriate metrics, and evidence-based remediation strategies. This taxonomy is intended as a practical decision tool for practitioners conducting explanation fairness audits.

\begin{table*}[t]
\caption{Failure taxonomy for explanation fairness auditing. Each row maps a failure type to its causal mechanism, audit signals, measurement metrics, and remediation strategies.}
\label{tab:failure_taxonomy}
\small
\begin{tabular}{p{2.2cm} p{2.4cm} p{2.5cm} p{2.2cm} p{3.0cm}}
\toprule
\textbf{Failure type} & \textbf{Causal mechanism} & \textbf{Audit signal} & \textbf{Metric} & \textbf{Remediation} \\
\midrule
Representation disparity & Training data imbalance & Lower SHAP stability for minority & $\Delta\text{VEF}$, GESD & Resampling, augmentation \\
\addlinespace[2pt]
Surrogate mismatch & Sparse manifold regions & Low LIME fidelity for minority & GCIG & Distribution-aware explainer \\
\addlinespace[2pt]
Proxy leakage & Correlated proxy features & Proxy feature dominates SHAP & EDiff & Causal debiasing \\
\addlinespace[2pt]
Actionability gap & Structural socioeconomic barriers & High recourse cost disparity & Recourse CD & Constraint-aware recourse \\
\addlinespace[2pt]
Fairwashing & Model-agnostic auditing & Explanation-prediction mismatch & Anders impossibility & Ante-hoc interpretable model \\
\addlinespace[2pt]
Temporal degradation & Distributional shift & Explanation obsolescence rate & $\text{TEF}(t)$ & Adaptive recalibration \\
\addlinespace[2pt]
Intersectional blind spot & Gerrymandering & Marginal fairness with subgroup violation & GPF$_\text{FAE}$ by subgroup & Multicalibration, subgroup audit \\
\addlinespace[2pt]
Cognitive accessibility gap & Unequal comprehension & Human evaluation study & User study metrics & Narrative redesign, co-design \\
\bottomrule
\end{tabular}
\end{table*}

\subsection{Practitioner Decision Guide: Choosing an Explanation Fairness Method}
\label{sec:decision_guide}

Choosing among available explanation fairness methods requires matching method assumptions to the deployment context. Table~\ref{tab:decision_guide} provides a structured guide for practitioners, mapping key contextual constraints to recommended methods and their limitations.

\begin{table*}[t]
\caption{Practitioner decision guide for selecting explanation fairness methods based on deployment context and constraints.}
\label{tab:decision_guide}
\small
\begin{tabular}{p{2.8cm} p{2.5cm} p{2.5cm} p{2.5cm} p{2.0cm}}
\toprule
\textbf{Context} & \textbf{Key constraint} & \textbf{Recommended method} & \textbf{Limitation} & \textbf{Reference} \\
\midrule
Post-hoc audit, no model access & Black-box access only & GPF$_\text{FAE}$ (SHAP-based) & Cannot detect fairwashing & \cite{wang2024procedural} \\
\addlinespace[2pt]
Training-time mitigation & Differentiable model & GCIG & Require Neural Network & \cite{popoola2026gcig} \\
\addlinespace[2pt]
Known causal graph & Causal structure available & Causal recourse (Karimi et al.) & Graph must be correct & \cite{karimi2021algorithmic} \\
\addlinespace[2pt]
Recourse focus & Actionability required & FACTS, equalizing recourse & Cost metric dependent & \cite{kavouras2023fairness} \\
\addlinespace[2pt]
Graph-structured data & Node/edge relational data & REFEREE, BIND & GNN-specific & \cite{dong2022referee} \\
\addlinespace[2pt]
Intersectional audit & Multiple protected attributes & Multicalibration + subgroup audit & Exponential subgroups & \cite{hebert2018multicalibration} \\
\addlinespace[2pt]
Human-centred evaluation & Epistemic fairness required & Participatory design + user study & Resource-intensive & \cite{dodge2019explaining} \\
\bottomrule
\end{tabular}
\end{table*}

\subsection{What Explanation Fairness Cannot Guarantee}
\label{sec:limits}

The audit methods described above are useful and necessary. They are also bound in ways that practitioners and regulators must understand explicitly. This subsection states those bounds without hedging.

\textbf{Post-hoc explanation methods cannot certify fairness.} Anders et al.~\cite{anders2020fairwashing} prove that under natural axiomatic conditions, most post-hoc methods can distinguish a fair from an unfair model using explanation output alone. This is not a statement that current tools are insufficiently good, but rather about what \emph{most} post-hoc tools can achieve. A system that passes all post-hoc explanation fairness audits can still be profoundly unfair. Auditors who rely exclusively on explanation inspection are receiving a lower bound on fairness, not a certificate.

\textbf{Explanation fairness $\neq$ model fairness.} A model can produce distributionally fair predictions ($\Delta$SP $\approx 0$) while being procedurally unfair~\cite{zhao2023fairness}. Conversely, a model can be procedurally fair in its explanations while producing discriminatory outcomes. Wang et al.~\cite{wang2024procedural} demonstrate both combinations empirically across eight datasets. Outcome fairness audits do not substitute for explanation fairness audits, and vice versa.

\textbf{Post-hoc methods cannot ensure causal neutrality.} SHAP and LIME distribute attribution credit according to the model's learned correlations, not causal structure~\cite{ng2025causal}. When protected attributes are correlated with proxy features, explanation methods will attribute credit to proxies in proportion to their correlational contribution. The resulting explanation is technically faithful to the model's statistical behavior but causally misleading about the source of discrimination. There is no post-hoc fix for this; it requires causal graph knowledge or model restructuring.

\textbf{Some inequities are epistemic, not statistical.} The cognitive accessibility gap (Section~\ref{sec:epistemic}) and actionability asymmetry (Section~\ref{sec:mechanisms}) are not detectable by any metric over explanation vectors. A model whose SHAP distributions are identical across groups may still systematically disadvantage minority users who cannot effectively interpret or act on those explanations. Epistemic unfairness is invisible to Steps 1- 4 of the evaluation workflow and requires human-centered methods (Step 6) to detect.

\textbf{The identifiability barrier is hard to remove.} Post-hoc explanation fairness is an interventional quantity (Section~\ref{sec:framework}), it asks what the explanation \emph{would be} if the protected attribute were different. Answering this question exactly requires causal assumptions not available from observational data. Every metric in Table~\ref{tab:unification} makes different approximating assumptions to work around this barrier. 
Stating these limits explicitly is not defeatist, but it tells practitioners what they can rely on, what they cannot, and where the boundary lies.

Internal auditing methods that directly inspect model weights, training data, and learned representations offer stronger guarantees than external explanation-based auditing~\cite{rudin2019stop}, but even these cannot fully solve the identifiability problem without causal assumptions.

\subsection{Regulatory Context}

The European Union's General Data Protection Regulation (GDPR), Article 22, establishes a right to an explanation for individuals subject to automated decisions. The EU AI Act (2024) imposes transparency and fairness requirements on high-risk AI systems. In the United States, the Equal Credit Opportunity Act (ECOA) and Fair Housing Act impose non-discrimination requirements in credit and housing, and regulatory guidance from the Consumer Financial Protection Bureau (CFPB) increasingly requires explainability as a component of fair lending compliance. These regulatory frameworks create strong demand for auditable explanation fairness, motivating the technical developments surveyed in this paper.

\section{Domain Applications}\label{sec:domains}
The abstract concerns of explanation fairness become concrete and urgent in specific high-stakes application domains. This section surveys four domains where the intersection of fairness and explainability has been most intensively studied and concludes with a cross-domain synthesis identifying common structural features that explanation fairness frameworks must address.

\subsection{Criminal Justice and Recidivism Prediction}

The COMPAS recidivism prediction system has become the most extensively studied case study in algorithmic fairness. Angwin et al.~\cite{angwin2016machine} showed that COMPAS misclassified Black defendants as high-risk at nearly twice the rate of white defendants. 

From an explanatory perspective, criminal justice prediction raises acute concerns. Slack et al.~\cite{slack2020fooling} used COMPAS as one of their experimental domains, demonstrating that both SHAP and LIME could be fairwashed on recidivism prediction data. Wang et al.~\cite{wang2024procedural} validate GPF$_\text{FAE}$ on COMPAS and show that procedurally unfair models are common even when distributional fairness constraints are satisfied. The actionability of recourse recommendations from COMPAS-like systems is particularly contested; recidivism risk factors may include features that are causally downstream of protected attributes (e.g., prior arrests, which are influenced by racial profiling) and that are therefore not genuinely actionable through individual behavior change. This domain illustrates the failure of Axiom 4 (actionability asymmetry) in its most consequential form.

\subsection{Healthcare and Clinical Decision Support}

Obermeyer et al.~\cite{obermeyer2019dissecting} demonstrated that a widely used commercial healthcare algorithm allocated resources based on healthcare costs rather than healthcare needs, resulting in systematic underallocation to Black patients, who incur lower costs for equivalent health needs due to inequitable access to care. Their study used SHAP decomposition to attribute this disparity to the specific features driving the algorithm's predictions, demonstrating the power of feature attribution methods for diagnosing bias.

From an explanation fairness perspective, healthcare introduces additional complexity, where clinical decisions must often be explained to patients, families, and clinicians in ways that are comprehensible to their level of medical literacy. Menon et al.~\cite{menon2024lessons} draw on clinical communication research to argue that explanation quality in medical AI must account for the diversity of health literacy across demographic groups, requiring group-aware explanation design rather than one-size-fits-all explanation methods. Lunich and Keller~\cite{lunich2024explainable} show that explanation complexity affects perceived fairness in educational prediction contexts, with implications for how clinical AI explanations should be designed for equity. This domain uniquely surfaces Axiom 5 violations (epistemic accessibility), where a patient may receive a technically accurate explanation of a clinical risk score that they cannot meaningfully interpret or contest.

\subsection{Credit and Lending}

The credit domain provides the richest empirical context for the fairness of explanation. Fuster et al.~\cite{fuster2022predictably} document that machine learning-based credit scoring algorithms produce predictions that are more accurate on average but exhibit higher racial disparities than traditional scoring methods. This ``accuracy-fairness'' tradeoff is exacerbated by the fact that ML models use a richer feature set, which may include proxies for race that are not explicitly excluded \cite{robb2018testing}.

Ustun et al.~\cite{ustun2019actionable} document recourse disparities in lending, showing that individuals with lower income, fewer education credentials, and higher debt, characteristics disproportionately prevalent in minority groups, face systematically higher barriers to algorithmic recourse. Their scoring system formulation makes these disparities explicit and provides a framework for designing models that achieve equitable recourse availability as a design criterion rather than an afterthought. The credit domain is the primary testing ground for the representation-driven causal chain (Section~\ref{sec:mechanisms}), where data imbalance in historical loan records produces embedding skew that propagates through attribution distortion into recourse disparity.

\subsection{Employment and Hiring}

The employment domain presents explanation fairness challenges distinct from credit and criminal justice because the decision is inherently forward-looking (assessing potential) rather than backward-looking (assessing history), and because the feature space includes assessments of subjective qualities such as ``culture fit'' and ``communication style'' that are particularly susceptible to proxy leakage~\cite{dastin2018amazon}.

Amazon's internal resume screening algorithm, discontinued in 2018~\cite{dastin2018amazon}, systematically downgraded resumes containing the word ``women's'' (as in ``women's chess club'') and graduates of all-women's colleges, having learned from ten years of male-dominated hiring data that maleness correlated with success. This is a typical case of representation-driven inequity, where the training data encoded historical gender discrimination, producing embedding skew that attributed high importance to male-coded features. Critically, if explanation methods had been applied to explain the algorithm's decisions to reject candidates, the explanation would have attributed credit to ostensibly neutral features (technical skills, university rankings) while concealing the latent gender signal, a common Axiom 2 violation (counterfactual stability).

Raghavan et al.~\cite{raghavan2020mitigating} survey AI-based hiring tools and find that auditing practices are inconsistent and often insufficient, with post-hoc explanation auditing being particularly unreliable due to the identifiability barrier (Proposition~\ref{prop:identifiability}). The employment domain also raises unique intersectional concerns, which is the intersection of gender and race in hiring discrimination, is empirically documented to produce subgroup-specific explanation failures that are invisible to single-attribute auditing~\cite{buolamwini2018gender}.

\subsection{Cross-Domain Synthesis: Structural Features of High-Stakes Explanation Fairness}

Three structural features recur across all four domains and should be treated as design requirements for any explanation fairness framework deployed in high-stakes settings.

\emph{The actionability-feasibility gap is domain-specific.} Actionable recourse in credit means changing financial behavior; in criminal justice, it means changing social circumstances; in healthcare, it may be biologically impossible; in employment, it may require institutional access that is structurally unequal. A framework that treats actionability as domain-agnostic will systematically underestimate the recourse burden for minority groups in the most constrained domains.

\emph{Temporal dynamics differ by domain.} Credit scoring models face distributional shift driven by economic cycles; healthcare models face shift driven by treatment innovations and population health changes; employment models face shift driven by labor market conditions. The rate of explanation obsolescence (Section~\ref{sec:open}) therefore varies systematically by domain, and temporal fairness frameworks must be calibrated domain-specifically.

\emph{Regulatory context determines the acceptable epistemic access floor.} GDPR Article 22 establishes a right to explanation in all four domains, but the operationalization of ``meaningful explanation'' differs: CFPB guidance specifies adverse action notice requirements in credit; NHS guidance specifies patient information standards in healthcare; EEOC guidance specifies anti-discrimination disclosure in employment. Explanation fairness frameworks must be regulatory-aware to be deployable.

\section{Research Agenda: Imperatives and Open Problems}
\label{sec:open}
This section does two things that a definitive survey must do. First, we state a set of \emph{research imperatives}, concrete directives specifying what the field must build, not merely questions it should investigate. Second, we catalog open problems prioritized by urgency and tractability, distinguishing between problems ripe for solution with existing tools and those requiring conceptual breakthroughs.

\subsection{Research Imperatives}
\label{sec:imperatives}

Five imperatives follow directly from the analysis in this survey. They are stated as requirements, not suggestions.

\textbf{Imperative 1: Explanation-aware training must replace post-hoc explanation in high-stakes domains.} Proposition~\ref{prop:identifiability} establishes that post-hoc methods cannot certify procedural fairness. For systems that make decisions in credit, criminal justice, healthcare, and employment, this structural limitation is unacceptable. Training-time methods that enforce the conditional invariance condition (Definition~\ref{def:ef}) as a hard constraint, not a regularization term, must become the standard. This requires developing differentiable enforcement mechanisms such as GCIG and FairX, and integrating them with regulatory compliance frameworks.

\textbf{Imperative 2: Explanation fairness evaluation must be standardized in shared benchmarks.} Progress is currently unmeasurable because studies use inconsistent datasets, preprocessing choices, and metrics. A standardized benchmark analogous to GLUE for NLP, with canonical preprocessing, shared protected attribute definitions, and coverage of all explanation types (attribution, counterfactual, rule-based, graph), must be built. Without this, the field cannot accumulate evidence or compare methods.

\textbf{Imperative 3: Causal identification must be integrated into explanation generation.} The identifiability barrier (Section~\ref{sec:identifiability}) shows that observational data alone cannot certify explanation fairness. Methods that integrate causal discovery~\cite{spirtes2000causation} with explanation generation, producing explanations that are faithful to causal structure, not merely statistical patterns,must replace purely correlational attribution methods for deployments where causal accountability is legally or ethically required.

\textbf{Imperative 4: Human-centered evaluation of epistemic fairness must become an important audit requirement.} Axiom 5 (epistemic accessibility) is currently unmeasured by any quantitative metric. Regulatory bodies (CFPB, NHS, ECOA) must require evidence that explanations are equally comprehensible and actionable across demographic groups, not merely equal in information content. This requires developing validated human evaluation protocols, analogous to the GDPR-mandated Data Protection Impact Assessment, specifically for explanation fairness.

\textbf{Imperative 5: Temporal monitoring of explanation fairness must be built into deployment pipelines.} Explanation obsolescence is a known, measurable phenomenon. Deployers of ML systems in dynamic environments must implement continuous monitoring of $\text{TEF}(t)$ across protected groups, with automated alerts when the differential explanation fairness drift crosses a threshold, and mandatory recalibration protocols. This should be a regulatory requirement, not an optional best practice.

\subsection{Open Problems, Prioritized by Urgency and Tractability}
\label{sec:openproblems}

We prioritize seven open problems on a two-dimensional scale: \emph{urgency} (how soon are solutions needed given deployment pace?) and \emph{tractability} (how close are current methods to a solution?). High-urgency, high-tractability problems should attract immediate research investment; high-urgency, low-tractability problems need conceptual breakthroughs and longer-horizon funding.

\paragraph{P1 [Urgency: High | Tractability: High] Unified Evaluation Benchmarks.}
The absence of standardized benchmarks for explanation fairness is the single most immediately solvable obstacle to field progress. The technical work required is modest, including canonical preprocessing pipelines and shared evaluation protocols, but coordination across research groups is needed. A standardized benchmark suite covering feature attribution methods (SHAP, LIME, integrated gradients), counterfactual methods (DiCE, Wachter, Growing Spheres), rule-based methods (ANCHOR), and graph methods (GNNExplainer, REFEREE) could be assembled within one to two years.

\paragraph{P2 [Urgency: High | Tractability: Moderate] Explanation Fairness for LLMs.}
Three distinct fairness problems arise in LLMs: chain-of-thought disparities (reasoning chains exhibiting demographic bias independent of outputs), alignment-explanation mismatch (alignment training systematically suppressing accurate but sensitive explanations), and post-hoc rationalization (generated explanations disconnected from actual computation). Natural language metrics analogous to GPF$_\text{FAE}$ for generated text, faithfulness probing methods, and causal intervention approaches in LLM internals are all tractable research directions.

\paragraph{P3 [Urgency: High | Tractability: Moderate] Temporal Fairness Under Distributional Shift.}
The formal TEF(t) metric provides a measurement framework; what is needed is a set of shift-adaptive explanation methods that recalibrate without full model retraining, and longitudinal datasets to validate them. This is tractable with existing distributional robustness methodology.

\paragraph{P4 [Urgency: High | Tractability: Low] Causal Identification for Explanation Fairness.}
Integrating causal discovery with explanation generation requires solving the identifiability problem under realistic conditions: unknown causal graphs, hidden confounders, and non-linear causal mechanisms. No existing method achieves this; progress requires simultaneous advances in causal discovery theory and graph-based explanation methods.

\paragraph{P5 [Urgency: Moderate | Tractability: Moderate] Intersectional Explanation Fairness at Scale.} Though \cite{popoola2025mesd} proposed MESD to mitigate explanation bias in intersectional groups, the field of intersectional explanation fairness is still open, and the combinatorial explosion of intersectional subgroups makes comprehensive auditing computationally intractable in the worst case~\cite{kearns2018gerrymandering}. Efficient algorithms leveraging multicalibration~\cite{hebert2018multicalibration} structure or game-theoretic auditor-learner frameworks may provide tractable approximations. This is an active research area with concrete algorithmic entry points.

\paragraph{P6 [Urgency: Moderate | Tractability: Low] Formal Epistemic Fairness Metrics.}
Axiom 5 requires human evaluation protocols that cannot currently be automated. Developing psychometric instruments validated across demographic groups, calibrated to domain-specific regulatory requirements, and administrable at scale,without requiring individual human evaluation of every model decision,is a genuine conceptual challenge requiring interdisciplinary work between ML researchers, cognitive scientists, and regulatory experts.

\paragraph{P7 [Urgency: Low | Tractability: High] Human-Centered and Participatory Design Standards.}
The technical infrastructure for participatory explanation design exists; what is missing is standardized protocols for involving affected communities in defining explanation fairness criteria. Methodological work from social computing and participatory action research provides a foundation; adapting it to ML deployment contexts is tractable but requires sustained community partnerships.

\section{Conclusion}
\label{sec:conclusion}

This survey has provided what we believe is the earliest theoretically unified treatment of the emerging field of fairness of explanations in artificial intelligence. Beyond surveying numerous publications and introducing a seven-dimensional taxonomy, we have made eight contributions that distinguish this work from a literature map and from prior surveys that treat explanation fairness as reducible to either outcome fairness or explanation quality.

\textbf{(1) Conditional invariance framework.} We formalized explanation fairness as a conditional invariance principle (Definition~\ref{def:ef}, Eq.~\ref{eq:ef_invariance}): an explanation method is fair if and only if its output distribution is invariant to the protected attribute when task-relevant features are held constant. This single principle unifies all existing metrics as partial operationalizations under different assumptions (Table~\ref{tab:unification}).

\textbf{(2) Five axioms of a fair explanation system.} We extracted five positive design axioms from the invariance condition: Process Consistency, Counterfactual Stability, Distributional Parity, Actionability Symmetry, and Epistemic Accessibility (Section~\ref{sec:axioms}). These axioms are not all mutually redundant; satisfying Axioms 1--4 does not imply Axiom 5, and none is currently satisfied by any post-hoc method by design.

\textbf{(3) Identifiability proposition and hard limit.} We formalized as Proposition~\ref{prop:identifiability} the structural result that no post-hoc method can certify explanation fairness from model outputs alone, and stated as Remark~\ref{rem:hardlimit} that post-hoc methods are \emph{structurally incapable} of certifying procedural fairness. Table~\ref{tab:posthoc_vs_intrinsic} makes the architectural implications concrete across three design families.

\textbf{(4) Three generative mechanisms with full causal chain.} We identified three pathways through which explanation inequity arises (Section~\ref{sec:mechanisms}): representation-driven inequity, explanation-model mismatch, and actionability-driven inequity. Crucially, we traced the full causal chain explicitly, from data imbalance through latent embedding skew and decision boundary asymmetry to attribution distortion and recourse disparity, making clear that debiasing any single link leaves the others intact.

\textbf{(5) Standard six-step evaluation workflow.} We proposed a canonical audit protocol (Section~\ref{sec:eval_workflow}) designed to probe all three generative pathways in sequence, from automated metric-based steps through human interpretability validation. This is designed to become the standard audit protocol for deployed systems.

\textbf{(6) Explicit statement of what explanation fairness cannot guarantee.} We stated five hard limits without hedging (Section~\ref{sec:limits}): post-hoc methods cannot certify explanation fairness; explanation fairness is not equivalent to model fairness; post-hoc methods cannot ensure causal neutrality; epistemic inequities are invisible to statistical metrics; the identifiability barrier is permanent. A field cannot mature until it knows its own limits.

\textbf{(8) Research agenda: five imperatives and seven prioritized open problems.} Unlike open-problems lists that catalog questions, we stated five concrete directives specifying what must be built (Section~\ref{sec:imperatives}): explanation-aware training must replace post-hoc auditing in high-stakes domains; benchmarks must be standardized; causal identification must be integrated into explanation generation; epistemic fairness must become a first-class regulatory requirement; temporal monitoring must be built into deployment pipelines. Seven open problems are then prioritized by urgency and tractability, enabling strategic research investment.

Two themes run through all eight contributions. First, most of the research in the field has been approaching the problem of explanation fairness wrongly: auditing explanations post-hoc cannot substitute for enforcing fairness during training and architecture design. Proposition~\ref{prop:identifiability} makes this not a recommendation but a structural necessity. Second, the procedural gap, the gap between outcome fairness and explanation fairness, is not a minor measurement refinement, but a fundamental conceptual gap that prior works in both fairness and XAI have systematically missed.

The practical stakes are substantial. As XAI methods become standard components of regulated AI systems across credit, healthcare, criminal justice, and employment, the equity of explanations will face the same legal and social scrutiny as the equity of predictions. The framework, axioms, propositions, mechanism taxonomy, evaluation workflow, failure taxonomy, imperatives, and explicit limits provided here offer a structured theoretical foundation for meeting that challenge. This survey is intended not as a snapshot of the literature but as the reference against which future work in explanation fairness measures itself.

%%
%% The next line prints the references.
\printbibliography

\end{document}